\newcommand{\mainAlg}{{\textsc{Dual}}}
\DeclareMathOperator{\argmax}{argmax}
\DeclareMathOperator{\argmin}{argmin}
\newcommand{\OPT}{\texttt{OPT}}
\newcommand{\ubOPT}{\overline{\texttt{OPT}}}
\newcommand{\ug}{\underline g}
\newcommand{\ugone}{\ell}
\newcommand{\ugtwo}{\underline g}
\newcommand{\N}{\mathbb{N}}                     
\newcommand{\R}{\mathbb{R}}                     
\renewcommand{\S}{\mathcal{S}}
\newcommand{\T}{\mathcal{T}}
\newcommand{\bx}{\mathbf{x}}
\newtheorem*{theorem*}{Theorem}
\newtheorem*{definition*}{Definition}
\newtheorem{lemma}{Lemma}
\newtheorem{theorem}{Theorem}
\newtheorem{definition}{Definition}
\newtheorem{proposition}{Proposition}
\newenvironment{customthm}[1]
  {\innercustomthm}
  {\endinnercustomthm}
\begin{document}
	\title{Instance Specific Approximations for Submodular Maximization}
	\author{ Eric Balkanski \footnote{Columbia University, eb3224@columbia.edu} 
\and Sharon Qian \footnote{Harvard University, sharonqian@g.harvard.edu}
\and Yaron Singer \footnote{Harvard University, yaron@seas.harvard.edu}}
\date{}	
\setcounter{page}{0}
\maketitle	
\begin{abstract}
For many  optimization problems in machine learning, finding an optimal solution is computationally intractable and we seek algorithms that perform well in practice. Since computational intractability often results from pathological instances, we look for methods to benchmark the performance of algorithms against optimal solutions on real-world instances. The main challenge is that an optimal solution cannot be efficiently computed for intractable problems, and we therefore often do not know how far a solution is from being optimal. A major question is therefore how to measure the performance of an algorithm in comparison to an optimal solution on instances we encounter in practice.

In this paper, we address this question in the context of submodular optimization problems. For the canonical problem of submodular maximization under a cardinality constraint, it is  intractable to compute a solution that is better than a $1-1/e \approx 0.63$ fraction of the optimum. Algorithms like the celebrated greedy algorithm are guaranteed to achieve this $1-1/e$ bound on any instance and are used in practice.

Our main contribution is not a new algorithm for submodular maximization but an analytical method that measures how close an algorithm for submodular maximization is to optimal on a given problem instance. We use this method to show that on a wide variety of real-world datasets and objectives, the approximation of the solution found by greedy goes well beyond $1-1/e$ and is often at least $0.95$. We develop this method using a novel technique that lower bounds the objective of a dual minimization problem to obtain an upper bound on the value of an optimal solution to the primal maximization problem.

\end{abstract}
\newpage


\section{Introduction}

A central challenge in machine learning is that many of the optimization problems we deal with are  computationally intractable. For problems like clustering, sparse recovery, and maximum likelihood estimation for example, finding an optimal solution is computationally intractable and we seek heuristics that perform well in practice. Computational intractability implies that under worst case analysis any efficient algorithm is suboptimal; however, worst-case approximation guarantees are often due to pathological instances that are not representative of instances we encounter in practice. Thus, we would like to be assured that the algorithms we use, despite poor performance on pathological instances, perform provably well on real-world instances.

In order to evaluate the performance of an algorithm on real-world instances, we would like to measure its performance in comparison to an optimal solution. The main challenge, however, is that we cannot evaluate the performance of an algorithm against an optimal solution since finding an optimal solution for a computationally intractable problem is, by definition, intractable. Thus, we often do not know how far an algorithm's solution is from optimal, and whether there is a substantially better algorithm. Therefore, for computationally intractable problems, our main challenge is not necessarily how to design better algorithms, but rather how to measure the performance of an algorithm in comparison to a theoretically optimal solution on real-world instances.

\begin{center}
	
	\emph{How do we measure the performance of an algorithm on specific instances \\for problems that are intractable?}
	
\end{center}

In this paper, we develop a method to measure how close to optimal the performance of an algorithm is on specific instances for the broad class of submodular maximization problems. In machine learning, many objectives that we aim to optimize, such as coverage, diversity, entropy, and graph cuts are submodular. As a result, submodular maximization algorithms are heavily employed in applications such as speech and document summarization \cite{lin2011class}, recommender systems \cite{mirzasoleiman2016fast},  feature selection \cite{das2011submodular}, sensor placement \cite{guestrin2005near}, and network analysis \cite{kempe2003maximizing}. 

Submodular maximization provides an ideal framework to address our main question 
because it is  intractable to compute a solution that is better than a $1-1/e$ approximation for the canonical problem of maximizing a monotone submodular function under a cardinality constraint \cite{nemhauser1978best}. In addition, multiple algorithms are known to enjoy constant factor approximation guarantees, such as the greedy algorithm that achieves this $1-1/e$ approximation on any instance \cite{NWF78}. Even though greedy is widely used in practice, we do not know how close to optimal its performance  is on the instances we encounter, except that it finds a solution of value that is at least a $1-1/e$ fraction of the optimal value.

\paragraph{Our contribution.} We develop a novel and efficient method, called \textsc{Dual}, to measure how close to optimal the performance of an algorithm is on an instance of maximizing a monotone submodular function under a cardinality constraint. This instance specific approximation is obtained by upper bounding the optimal  value of an instance.
We use this method to show that greedy, as well as other submodular maximization algorithms, perform significantly better than $1-1/e$ in practice. On a wide variety of large real-world datasets and objectives, we find that the approximation of the solution found by greedy almost always exceeds $0.85$ and often exceeds $0.95$, a $50$ percent improvement over $1-1/e \approx 0.63$. 
Additionally, we show that \mainAlg \ significantly outperforms multiple benchmarks for measuring instance specific approximations.

\subsection{Technical overview} Given an instance of the optimization problem $\max_{|S| \leq k} f(S)$, where $f: 2^N \rightarrow \R$ is a monotone submodular function,  \mainAlg \ measures how close to optimal a solution $S$ is by upper bounding the optimal value $\OPT$ of the problem. We take a primal-dual approach to upper bounding $\OPT$ that lower bounds the optimal value of a dual minimization problem. The dual problem that we consider is $g(v) = \min_{S: f(S) \geq v} |S|$, which consists of finding the solution $S$ of minimum size that has value at least $v$. The main technical part of our approach is the  construction of a function $\ug(v)$ that  lower bounds $g(v)$  and  is efficiently computable for all values $v$. Given such a function $\ug(v)$, we then find the maximum value  $v^{\star}$ such that $\ug(v^{\star}) \leq k$, which is the upper bound on $\OPT$ used to measure how close a solution $S$ is to optimal. 



In Section~\ref{sec:coverage}, we first consider coverage functions,  which are a subclass of submodular functions where the goal is to maximize the coverage of a universe $U$. For coverage functions, we consider the dual objective $g : 2^U \rightarrow \R$ that consists of finding the minimum size of  a set $S \subseteq N$  that covers $T \subseteq U$. This dual objective is a special case of $g(v) = \min_{S: f(S) \geq v} |S|$ that has additional structure since it is defined over a dual space of elements $U$. We take advantage of this additional structure to construct lower bounds on $g$. Our first lower bound on $g$ is an \emph{additive} function $\ell : 2^U \rightarrow \R$ over the dual space of elements of $U$. Since $\ell$ is additive, it can be minimized efficiently to give a lower bound on the dual problem.  We then improve $\ell(T)$ by construction a tighter, more sophisticated, lower bound $\ug(T)$ that can still be minimized efficiently. This lower bound is based on partitioning the dual space $T \subseteq U$ into parts  $P_i$, $\cup_{i=1}^kP_i = T$. These parts are such that, for all $i \in [k]$, there is no  element $a \in N$ that can cover more than $|P_i|$  elements in $U \setminus \cup_{j=1}^iP_j$.

In Section~\ref{sec:submodular}, we generalize the lower bound $\ug(T)$ on the dual objective $g : 2^U \rightarrow \R$ for coverage functions to a lower bound $\ug(v)$ on the dual objective $g(v)$ for general submodular functions. Instead of partitioning universe $U$, which is specific to coverage functions, the lower bound $\ug(v)$ for submodular functions partitions value $v$ into $k$ values $v_1, \ldots, v_k$, $\sum_i v_i = v$. These values are such that, for each $i \in [k]$, there is no set $S \subseteq N$ of size $i$ that satisfies $f(S) > \sum_{j=1}^i v_i$.

\subsection{Related work} 

\paragraph{Explanations for the performance of greedy in practice.} A closely related line of work has investigated different properties of submodular functions that enable improved approximation guarantees for the greedy algorithm. The curvature $c \in [0,1]$ of a function $f$ measures how close $f$ is to additive \cite{CC84}. Submodular sharpness, an analog of sharpness from continuous optimization \cite{L63},  measures the behavior of a function $f$ around the set of optimal solutions \cite{PST19}. Finally,  a function $f$ is perturbation-stable if the optimal solution for maximizing $f$ does not change under small perturbations \cite{CRV17}. These parameterized properties all yield improved approximation guarantees for the greedy algorithm, with the additional benefit that they provide an explanation for the improved performance. However, the main issue with using these properties to measure greedy performance on specific instances is that the parameters of these properties cannot be computed efficiently. Since they require brute-force computation, these parameters have only been computed on small instances with at most $n = 20$ elements \cite{PST19} and  cannot be computed on real-world instances. In addition, on these small  instances, they yield approximations that are not as strong as those obtained by \mainAlg.

\paragraph{Continuous extensions.} For problems such as max-coverage and  traveling salesman problem  that can be formulated as integer linear programs,  we can use the LP relaxation of these formulations to obtain a bound on the optimal solution and use this bound to measure how close a solution is to optimal. Submodular functions have multiple continuous extensions but, unlike the LP relaxation of integer programs, these continuous extensions cannot be maximized efficiently. For the concave closure $F^+ : [0,1]^n \rightarrow \R$ of a submodular function $f$, it is  APX-hard to even evaluate $F^+(\bx)$ \cite{calinescu2007maximizing,vondrak2007submodularity}. The multilinear extension $F: [0,1]^n \rightarrow \R$ can be estimated arbitrarily well and is widely used in submodular maximization (e.g. \cite{vondrak2008optimal}), but  $\max_{\bx \in[0,1]^n:\|\bx\|_1 \leq k} F(\bx)$ cannot be approximated better than $1-1/e$. 

\paragraph{Practical submodular maximization.} Primarily motivated by  applications in machine learning, there have recently been multiple lines of work  on making submodular maximization algorithms more practical. The running time of submodular maximization has been reduced by improving the number of function evaluations \cite{minoux1978accelerated, MBKV15, buchbinder2015comparing}. Lines of work on distributed \cite{kumar2015fast, mirzasoleiman2013distributed, mirrokni2015randomized, barbosa2016new, liu2018submodular}, streaming \cite{badanidiyuru2014streaming, chekuri2015streaming, MBNTC17, FKK18, kazemi2019submodular}, and parallel \cite{BS18,  chekuri2019,  chen2018, ene2018, farbach2019, balkanski2019exponential} algorithms for submodular maximization address multiple challenges associated with large scale optimization. Motivated by applications where the objective is learned from data, recent lines of work have studied submodular optimization under noise \cite{hassidim2017submodular,horel2016maximization,hassidim2018optimization} and from samples \cite{balkanski2017limitations, balkanski2016power}. Different models for robust submodular optimization have also been considered \cite{bogunovic2017robust, mirzasoleiman2017deletion, chen2017robust}. Finally, maximizing weakly submodular objectives has been studied and captures problems such as feature selection \cite{das2011, elenberg2018, qian2019fast}.

\subsection{Preliminaries}

A function $f: 2^N \rightarrow \R$ is submodular if $f_S(a) \geq f_T(a)$ for all $S \subseteq T \subseteq N$ and $a \in N \setminus T$, where $f_S(a) = f(S \cup \{a\}) - f(S)$ is the marginal contribution of $a$ to $S$. It is monotone if $f(S) \leq f(T)$ for all $S \subseteq T \subseteq N$.  A function $f: 2^P \rightarrow \R$ is a coverage function if there  exists a bipartite graph $G = (P, D, E)$ over primal and dual elements $P \cup D$ such that $f(S) = |N_G(S)|$ where $N_G(S) = \cup_{a \in S} N_G(a) \subseteq D$ denotes the neighbors of $S \subseteq P$ in $G$. We say that set $S$ covers $T$, or equivalently that $T$ is covered by $S$, if  $T \subseteq N_G(S)$. A function $f: 2^N \rightarrow \R$ is additive if $f(S) = \sum_{a \in S} f(a)$.

Given  an instance of the problem $\max_{|S| \leq k} f(S)$ and a solution $S^{\star}$ to this problem, we aim to compute an approximation for $S^{\star}$, i.e. a lower bound on $f(S^{\star})/\max_{|S| \leq k} f(S)$ or, equivalently, an upper bound on  $\max_{|S| \leq k} f(S)$.

\section{Instance Specific Approximations for Max-Coverage}
\label{sec:coverage}

In this section, we develop a method that measures how close to optimal a solution  to an instance of maximum coverage is. The special case of coverage functions motivates and provides intuition for the main ideas behind the method for submodular functions in Section~\ref{sec:submodular}.  In Section~\ref{sec:dualcoverage}, we introduce the problem of minimum cover under a cardinality constraint, which is a generalization of the classical set cover problem. We show that a lower bound on this minimum cover problem implies an upper bound on the optimal value $\OPT$ for the max-coverage problem. In Section~\ref{sec:linearcoverage}, we present an additive lower bound on the dual problem, which is used to efficiently compute a lower bound of the optimal value to the dual problem. Then, in Section~\ref{sec:partitioncoverage}, we develop a more sophisticated lower bound on the dual problem.

\subsection{The dual problem}
\label{sec:dualcoverage}

We introduce the minimum cover under a cardinality  constraint problem. Recall that given a bipartite graph $G$ over  nodes $P$ and nodes $D$, the problem of maximum coverage  under a cardinality constraint  problem is to find the $k$  elements $S \subseteq P$ that maximize the number of elements $N_G(S) \subseteq D$ covered by $S$. In contrast, the minimum cover under a cardinality  constraint  problem is to find the $v$ elements $T \subseteq D$ that minimize the number of  elements $S \subseteq P$ needed to cover $T \subseteq N_G(S)$. When $G$ is clear from the context, we write $N(S)$ instead of $N_G(S)$ to denote the neighbors of $S$ in graph $G$.
\begin{definition}
	\label{def:mincover}
	The minimum cover under a cardinality constraint problem is defined as $$\min_{T \subseteq D: |T| \geq v} g(T),$$ where $$g(T) = \min_{S \subseteq P : T \subseteq N(S)} |S|$$ is the size of the minimum cover of $T$ and where $v \in [|D|]$ is the cardinality constraint.
\end{definition}
This  problem is a generalization of the classical set cover problem, which finds the minimum number of  elements $S$ to cover \emph{all} elements $D$.   We obtain the following duality property: a lower bound on  minimum cover  implies an upper bound on maximum coverage, and vice-versa.

\begin{lemma}\label{lem:duality}
	Let $f: 2^P \rightarrow \N$ be a coverage function defined over a biparite graph between elements $P$ and $D$. For any $k \in [|P|]$ and $v \in [|D|]$, $$\max_{S \subseteq P: |S| \leq k} f(S) < v \text{ if and only if }\min_{T \subseteq D: |T| \geq v}g(T) > k$$
	where $g: 2^D \rightarrow \N$ is the size of the minimum cover of $T$ as defined in Definition~\ref{def:mincover}. 
\end{lemma}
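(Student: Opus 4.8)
The plan is to prove the logically equivalent statement obtained by negating both sides of the biconditional. Since $f$ and $g$ take values in $\N$ and $k, v$ are integers, the condition $\max_{|S| \leq k} f(S) < v$ is precisely the negation of $\max_{|S| \leq k} f(S) \geq v$, and likewise $\min_{|T| \geq v} g(T) > k$ is the negation of $\min_{|T| \geq v} g(T) \leq k$. Hence it suffices to show
$$\max_{S \subseteq P: |S| \leq k} f(S) \geq v \iff \min_{T \subseteq D: |T| \geq v} g(T) \leq k.$$
Both sides really assert the existence of a single set $S \subseteq P$ with $|S| \leq k$ and $|N(S)| \geq v$, so the heart of the argument is just to exhibit this correspondence explicitly in each direction.

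For the forward direction, I would start from a set $S$ witnessing $\max_{|S| \leq k} f(S) \geq v$, so $|S| \leq k$ and $f(S) = |N(S)| \geq v$. Because $|N(S)| \geq v$, I can pick any subset $T \subseteq N(S)$ with $|T| = v$. Then $T \subseteq N(S)$ shows that $S$ is a feasible cover of $T$, so $g(T) \leq |S| \leq k$; since $|T| = v \geq v$, this $T$ is feasible for the dual minimization, giving $\min_{|T| \geq v} g(T) \leq k$. For the backward direction, I would start from a set $T$ witnessing $\min_{|T| \geq v} g(T) \leq k$, so $|T| \geq v$ and $g(T) \leq k$. By definition of $g$, there is a set $S$ with $T \subseteq N(S)$ and $|S| = g(T) \leq k$. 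Monotonicity of cardinality under inclusion then yields $f(S) = |N(S)| \geq |T| \geq v$, and together with $|S| \leq k$ this $S$ certifies $\max_{|S| \leq k} f(S) \geq v$.

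I do not expect a deep obstacle here: the lemma is essentially a clean restatement of the identity $\min_{T: |T| \geq v} g(T) = \min\{|S| : f(S) \geq v\}$, i.e. the minimum number of primal elements covering at least $v$ dual elements. The points that require care are (i) justifying that the witnessing sets for the $\max$ and $\min$ actually exist, which follows from finiteness of $P$ and $D$; and (ii) in the forward direction, ensuring we can extract a subset $T$ of size exactly $v$ from $N(S)$, which is exactly what the hypothesis $|N(S)| \geq v$ provides. I would also verify the integrality assumption $f, g : \,\cdot\, \to \N$ that allows negation to swap strict for non-strict inequalities, since this is what upgrades the existence-equivalence into the stated biconditional with its mismatched $<$ and $>$.
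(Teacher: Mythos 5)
Your proposal is correct and follows essentially the same route as the paper: both arguments prove the contrapositive equivalence $\max_{|S|\leq k} f(S) \geq v \iff \min_{|T|\geq v} g(T) \leq k$ by exhibiting witness sets in each direction (the paper takes $T = N(S^{\star})$ rather than a size-$v$ subset of it, an immaterial difference). Your added remarks on integrality and existence of the witnesses are fine but not needed beyond what the paper already does.
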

\begin{proof} We first prove that if $\min_{T \subseteq D: |T| \geq v}g(T) > k$ then $\max_{S \subseteq P: |S| \leq k} f(S) < v$. By contrapositive, 
	assume that $\max_{S \subseteq P: |S| \leq k} f(S) \geq v$. This implies that there exists $S^{\star}$ such that $f(S^{\star}) = |N(S^{\star})| \geq v$ and $|S^{\star}| \leq k$. We get 
	$$\min_{T \subseteq D: |T| \geq v} g(T) \leq g(N(S^{\star})) \leq |S^{\star}| \leq k.$$
	For the other direction, we again prove by contrapositive. Assume that $\min_{T \subseteq D: |T| \geq v}g(T) \leq k$. This implies that there exists $T^{\star}$ and $S^{\star}$ such that $|T^{\star}| \geq v$, $|S^{\star}| \leq k$, and  $T^{\star} \subseteq N(S^{\star})$. We get 
	\begin{align*}
		\max_{S \subseteq P: |S| \leq k} f(S) &\geq f(S^{\star}) \geq |T^{\star}| \geq v. \hfill \qedhere
	\end{align*}
\end{proof}
We refer to $\max_{S \subseteq P: |S| \leq k} f(S)$ and $\min_{T \subseteq D: |T| \geq v}g(T)$ as the primal and  dual problems. We also refer to $P$ and $D$ as the primal  and dual elements. 

\subsection{Warm-up: Approximations via an additive lower bound on the dual}\label{sec:linearcoverage}

This dual problem admits an additive lower bound that is, as we will show empirically in Section~\ref{sec:exp}, close to the dual objective in practice. This is in contrast to the primal maximum coverage problem, which is far from  additive on real instances.  We define the individual value $v_b$ of each dual element $b$ as 
$v_b = \min_{a \in N(b)} \frac{1}{|N(a)|}$. The additive function $\ugone: 2^D \rightarrow \R$ is defined as follows:
$$\ugone(T) = \sum_{b \in T} v_b = \sum_{b \in T} \min_{a \in N(b)} \frac{1}{|N(a)|}.$$
%
Note that if $b \in D$ is covered by primal element $a \in P$, then $a$  covers at most $1/v_b$ dual elements. In other words, $1/v_b$ is an upper bound on the value obtained from an element $a$ that covers $b$. 

We use this additive lower bound $\ugone(\cdot)$ on the dual objective to design a method that returns an upper bound on the optimal value for the primal problem. Method~\ref{alg:coverage} first orders the dual elements $b$ by increasing value $v_b$. It then finds the prefix $\{b_1, \ldots, b_{i^\star}\}$ of this ordering where $i^{\star}$ is the minimum size $i$ such that $\ugone(\{b_1, \ldots, b_i\}) = \sum_{j=1}^i v_{b_j}  > k$, and then returns  $i^{\star}$. In other words, it finds the largest size $i$ such that $\ugone(T) > k$ for all sets $T$ of size $i$.

\begin{algorithm}[H]
	\caption{Linear bound on dual objective for coverage}
	\label{alg:coverage}
	\begin{algorithmic}
		\INPUT  bipartite graph $G = (P, D, E)$,   constraint $k$
		\STATE  $v_{b} \leftarrow  \min_{a \in N(b)} \frac{1}{|N(a)|}$, for each $b \in D$
		\STATE  $(b_1, \ldots, b_{|D|}) \leftarrow$ elements $D$ ordered by increasing $v_{b_i}$
		\STATE $i^{\star} \leftarrow \min\{i: \sum_{j=1}^i v_{b_j} > k\}$
		\STATE \textbf{return} $i^{\star}$
	\end{algorithmic}
\end{algorithm}

\paragraph{The analysis.} We first show that $\ugone(\cdot)$ is a lower bound on the dual objective $g(\cdot)$ (Lemma~\ref{lem:cover_lowerbound}). We then show that $\{b_1, \ldots, b_i\}$ minimizes $\ugone$ over all sets of size at least $i$ (Lemma~\ref{lem:cov_optimal}). Together, these imply that there are no sets of primal elements of size $k$ which cover $i^{\star}$ dual elements and we obtain $i^{\star} > \max_{S \subseteq P: |S| \leq k} f(S)$ (Theorem~\ref{thm:coverage1}).

\begin{lemma}
	\label{lem:cover_lowerbound}
	For any coverage function defined by a bipartite graph $G$ between $P$ and $D$, we have that for  any set $T \subseteq D$, $\ugone(T) \leq g(T) = \min_{ S \subseteq P : T \subseteq N(S)} |S|.$
\end{lemma}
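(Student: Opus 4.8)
The plan is to prove the inequality via a charging argument that distributes the additive contributions $v_b$ of the dual elements $b \in T$ across the primal elements of an optimal cover. First I would fix a minimum-size cover $S^{\star} \subseteq P$ of $T$, so that $T \subseteq N(S^{\star})$ and $|S^{\star}| = g(T)$. Since every $b \in T$ lies in $N(S^{\star})$, each such $b$ is covered by at least one primal element of $S^{\star}$; I would therefore fix, for each $b \in T$, a single covering element $\sigma(b) \in S^{\star}$ with $b \in N(\sigma(b))$. This map $\sigma$ partitions $T$ into the groups $\{b \in T : \sigma(b) = a\}$ indexed by $a \in S^{\star}$.

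The core of the argument is then to bound the total contribution of each group by $1$. The key observation, already noted after the definition of $v_b$, is that whenever a primal element $a$ covers $b$ (i.e. $a \in N(b)$), the definition $v_b = \min_{a' \in N(b)} 1/|N(a')|$ forces $v_b \leq 1/|N(a)|$. Moreover, every $b$ assigned to $a$ satisfies $b \in N(a)$ by construction of $\sigma$, so the group $\{b \in T : \sigma(b) = a\}$ is contained in $N(a)$ and hence has cardinality at most $|N(a)|$. Combining these two facts gives, for each $a \in S^{\star}$,
\[
\sum_{b \in T : \sigma(b) = a} v_b \;\leq\; |N(a)| \cdot \frac{1}{|N(a)|} \;=\; 1 .
\]

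Finally I would sum this bound over all $a \in S^{\star}$. Because the groups partition $T$, the left-hand sides sum to $\ugone(T) = \sum_{b \in T} v_b$, while the right-hand side sums to $|S^{\star}| = g(T)$, yielding $\ugone(T) \leq g(T)$ as claimed. I do not expect a genuine obstacle; the only step requiring care is the counting, namely verifying that the preimage $\sigma^{-1}(a)$ really is contained in $N(a)$ so that its size is capped by $|N(a)|$, which is exactly the reciprocal weight bounding each $v_b$ in that group. This precise cancellation between the group size $|N(a)|$ and the per-element weight $1/|N(a)|$ is what makes the additive function $\ugone$ a valid lower bound on $g$.
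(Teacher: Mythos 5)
Your proof is correct and follows essentially the same approach as the paper: both arguments rest on the observation that each primal element $a$ in a cover contributes at most $|N(a)| \cdot \tfrac{1}{|N(a)|} = 1$ to the total weight $\sum_{b} v_b$. The only cosmetic difference is that you make the charging explicit via an assignment $\sigma$ of each $b$ to a single coverer, whereas the paper double-counts over all pairs $(a,b)$ with $b \in N(a)$ and uses that every $b \in T$ appears at least once.
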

\begin{proof} For any $S \subseteq P$ such that $T \subseteq N(S)$, we have
	\begin{align*}
		|S| & = \sum_{a \in S} \sum_{b \in N(a)} \frac{1}{|N(a)|}  \\
		& \geq \sum_{a \in S} \sum_{b \in N(a)} \min_{a' \in N(b)} \frac{1}{|N(a')|} \\
		& \geq \sum_{b \in T} \min_{a \in N(b)} \frac{1}{|N(a)|} \\
		&= \ugone(T). \qedhere
	\end{align*}
\end{proof}
 
\begin{lemma}
	\label{lem:cov_optimal}
	Consider the ordering of dual elements $D$ by increasing singleton values, i.e., $(b_1, \ldots, b_{|D|})$ where $\ugone(b_i) \leq \ugone(b_j)$ for all $i < j$, then, for all $v \leq |D|$,
	$\ugone(D_v) = \min_{T \subseteq D: |T| \geq v} \ugone(T)$ 
	where $D_v = \{b_1, \ldots, b_v\}$.
\end{lemma}
\begin{proof}
	Since $\ugone$ is an additive function, the set $T$ of size at least $v$ with minimum value is the set consisting of the $v$ dual elements of minimum singleton value, which is $D_v$.
\end{proof}



We are now ready to formally prove that Method~\ref{alg:coverage} returns an upper bound on the optimal value to the primal problem.

\begin{theorem}
	\label{thm:coverage1}
	For any $k$, let $i^{\star}$ be the value returned by Method~\ref{alg:coverage}, then  $i^{\star} > \max_{S \subseteq P: |S| \leq k} f(S)$. 
\end{theorem}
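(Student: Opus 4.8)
The plan is to invoke the duality of Lemma~\ref{lem:duality} with cardinality parameter $v = i^\star$, which reduces the claim to showing that the dual optimum $\min_{T \subseteq D:\, |T| \geq i^\star} g(T)$ strictly exceeds $k$. Indeed, Lemma~\ref{lem:duality} asserts that $\min_{T:\,|T|\geq i^\star} g(T) > k$ is equivalent to $\max_{S:\,|S|\leq k} f(S) < i^\star$, which is exactly the desired conclusion. So the entire theorem follows once I establish that the dual optimum is lower bounded by a quantity strictly greater than $k$.

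To lower bound the dual optimum, I would chain the two lemmas of this subsection. First, by Lemma~\ref{lem:cover_lowerbound}, $\ugone(T) \leq g(T)$ holds for every $T \subseteq D$; taking the minimum over all $T$ with $|T| \geq i^\star$ on both sides therefore gives $\min_{T:\,|T|\geq i^\star} g(T) \geq \min_{T:\,|T|\geq i^\star} \ugone(T)$. Second, Lemma~\ref{lem:cov_optimal} identifies the right-hand minimizer explicitly as the prefix $D_{i^\star} = \{b_1,\ldots,b_{i^\star}\}$, so that $\min_{T:\,|T|\geq i^\star} \ugone(T) = \ugone(D_{i^\star}) = \sum_{j=1}^{i^\star} v_{b_j}$.

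It then remains to use the definition of $i^\star$ produced by Method~\ref{alg:coverage}: since $i^\star = \min\{i : \sum_{j=1}^i v_{b_j} > k\}$, we have $\sum_{j=1}^{i^\star} v_{b_j} > k$ by construction. Combining the three facts yields $\min_{T:\,|T|\geq i^\star} g(T) \geq \ugone(D_{i^\star}) > k$, and the duality lemma then delivers $\max_{S:\,|S|\leq k} f(S) < i^\star$, completing the argument.

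There is no substantive obstacle here; the proof is a clean assembly of the three previously established facts. The only points that require care are getting the direction of the inequality in Lemma~\ref{lem:duality} right (a lower bound on the dual objective yields an \emph{upper} bound on the primal, not the converse), and confirming that $i^\star$ is well defined, i.e. that the set $\{i : \sum_{j=1}^i v_{b_j} > k\}$ is nonempty. The latter holds provided $\sum_{j=1}^{|D|} v_{b_j} > k$; when the total additive value does not exceed $k$, every dual element can be covered simultaneously and the bound is vacuous, so I would note that the interesting regime is the one in which $i^\star$ exists.
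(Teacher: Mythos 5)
Your proof is correct and follows exactly the paper's argument: chain the definition of $i^\star$ with Lemma~\ref{lem:cov_optimal} and Lemma~\ref{lem:cover_lowerbound} to get $k < \ugone(D_{i^\star}) = \min_{T : |T| \geq i^\star} \ugone(T) \leq \min_{T : |T| \geq i^\star} g(T)$, then apply Lemma~\ref{lem:duality}. Your added remark on the well-definedness of $i^\star$ is a reasonable caveat the paper leaves implicit, but it does not change the route.
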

\begin{proof}
	By definition of $i^{\star}$, Lemma~\ref{lem:cov_optimal} and Lemma~\ref{lem:cover_lowerbound}, 
	\begin{align*}
		k < \ugone(D_{i^{\star}}) & = \min_{\substack{T \subseteq D: |T| \geq i^{\star}}} \ugone(T) \leq  \min_{\substack{T \subseteq D: |T| \geq i^{\star}}} \min_{\substack{ S \subseteq P : \\T \subseteq N(S)}} |S|.
	\end{align*}
	By Lemma~\ref{lem:duality}, we get $i^{\star} > \max_{S \subseteq P: |S| \leq k} f(S)$. 
\end{proof}

\subsection{Improved method for coverage functions} \label{sec:partitioncoverage}

We improve  Method~\ref{alg:coverage} by constructing a lower bound $\ugtwo(\cdot)$ on the dual objective $g(\cdot)$ that is tighter than $\ell(\cdot)$. The function $\ugtwo(T)$ is obtained by partitioning the collection of dual elements $T$ into $j$ parts $\cup_{i=1}^j P_i = T$. We define the weight $w(P_i)$ of part $P_i$  to be 
$$w(P_i) = |P_i| \cdot \max_{b \in P_i} v_b = |P_i| \cdot \max_{b \in {P_i}} \min_{a \in N(b)} \frac{1}{|N(a)|}.$$

We note that if $w(P_i) > 1$, then dual elements $P_i$ cannot be covered by a single primal element since there must exist $b \in P_i$ such that $\max_{a \in N(b)} |N(a)| < |P_i|$. This motivates the following definition of a valid partition.

\begin{definition} 
	A partition $P_1, \ldots, P_j$ of  $T$ is \emph{valid} if $w(P_i) \leq 1$  for all  $i \leq j$.
\end{definition}

This definition is such that if a partition $P_1, \ldots, P_j$ is \emph{not} valid, then there must exist a part $P_i$ which cannot be covered  by a single primal element. We exploit this property to define the following improved lower bound $\ugtwo: 2^D \rightarrow \R$ on the dual objective:
$$\ugtwo(T) =\min\{j : \exists  \text{ a valid partition } P_1, \ldots, P_j \text{ of } T\}.$$
This lower bound $\ugtwo(T)$ is always tighter than the additive lower bound $\ugone (\cdot)$.

\begin{proposition}
	For any coverage function $f: 2^P \rightarrow \N$ defined by bipartite graph $(P, D, E)$, we have $\ugtwo(T) \geq \ugone(T)$ for all $T \subseteq D$.
\end{proposition}

Similarly as with Method~\ref{alg:coverage}, we use this lower bound $\ugtwo(\cdot)$ on the dual objective to design a method that returns an upper bound on the optimal value for the primal problem.  Method~\ref{alg:coverage2}, which will be generalized to Method~\ref{alg:sm} for submodular functions, iteratively constructs a valid partition $\cup_{\kappa=1}^k P_\kappa$ of a collection of dual elements $T$  such that $|T|$ is maximized. At iteration $\kappa$, part $P_\kappa$ of the partition is defined as the  collection of dual elements $\{b_{i_{\kappa-1} + 1}, \ldots, b_{i_{\kappa}} \}$, which are the dual elements with minimum value $v_b$ that are not in the previous parts $P_1, \ldots, P_{i-1}$, where $i_{\kappa}$ is the maximum index such that part $P_i$ is valid. The method returns value $i_k$, which is the total size $|\cup_{\kappa=1}^kP_\kappa|$ of the partition.

\begin{algorithm}[H]
	\caption{Dual bound via partitioning for coverage}
	\label{alg:coverage2}
	\begin{algorithmic}
		\INPUT  bipartite graph $G = (P, D, E)$,   constraint $k$
		\STATE    $v_{b} \leftarrow  \min_{a \in N(b)} \frac{1}{|N(a)|}$, for each $b \in D$
		\STATE  $(b_1, \ldots, b_{|D|}) \leftarrow$ elements $D$ ordered by increasing $v_{b_i}$
		\STATE $i_0 \leftarrow 0$
		\STATE \textbf{for} $\kappa = 1$ to $k$ \textbf{do}
		\STATE \qquad$i_\kappa \leftarrow \max\{i: (i-i_{\kappa-1} ) \cdot v_{b_i} \leq 1\}$
		\STATE \qquad $P_\kappa \leftarrow \{b_{i_{\kappa-1} + 1}, \ldots, b_{i_{\kappa}} \}$
		\STATE \textbf{return}  $i_k$
	\end{algorithmic}
\end{algorithm}

Since Method~\ref{alg:coverage2}  is a special case of Method~\ref{alg:sm}, the analysis of Method~\ref{alg:sm} in the next section also applies to Method~\ref{alg:coverage2}.

\section{Instance Specific Approximations for Submodular \\ Maximization}\label{sec:submodular}

In this section, we present our main method, \mainAlg, which generalizes Method~\ref{alg:coverage2} to submodular functions. For coverage functions, the value achieved by a solution  $S$ corresponds to the number of dual elements covered by $S$ and the optimal value can be upper bounded by analyzing dual elements. However, for general submodular functions, there are no dual elements corresponding to the solution value.   We first introduce a similar dual minimization problem as for coverage, but defined over values $v \in \R$ instead of dual elements $T \subseteq D$. We then construct a lower bound  on the dual objective and use it to design a method that upper bounds $\OPT$ in $O(n \log n)$ running time.

We introduce the minimum submodular cover under a cardinality constraint problem, where the goal is to find the smallest set $S$ of value at least $v$:
$$g(v) = \min_{S\subseteq N: f(S) \geq v} |S|.$$
This problem is a generalization of the submodular cover problem \cite{wolsey1982analysis}, which is to find the smallest set $S$ of value at least $f(N)$.
We note that, unlike the dual objective for coverage functions, there are no dual elements. 

 Next, we define a function $\ug(v)$ which lower bounds this dual objective. Similarly as for coverage functions, we consider partitions of the dual space  and we define a collection of valid partitions  that is used to then define $\ug(v)$. We assume that the ground set of elements $N = \{a_1, \ldots, a_n\}$ is indexed by decreasing singleton value, i.e., $f(a_i) \geq f(a_j)$ for all $i < j$. We define $A_i := \{a_1, \ldots, a_i\}$.

\begin{definition}
	Values $v_1,  \ldots, v_k \in \R$ form a valid partitioning of  $v \in \R$ if $\sum_{j \in [k]} v_j = v$ and there exists a witness $W \subseteq N$ such that, for all $j \in [k]$, $f(a_{i_j}) \geq v_j$ where 
	$$i_j = \min\left\{i: a_i \in W, f(W \cap A_i) \geq \sum_{\ell = 1}^j v_\ell\right\}.$$
\end{definition}
As we will show in Lemma~\ref{lem:lb}, for any solution set $S$, $v_1 = f_{S_{0}}(S_1), v_2 = f_{S_{1}}(S_2), \ldots,  v_{|S|} = f_{S_{|S|-1}}(S)$ forms a valid partioning of $v = f(S)$, where $S_j$ is the set of $j$ elements in $S$ with the largest singleton value. This implies that if a partition $v_1, \ldots, v_k$ is not valid, then there is no solution $S$ of size $k$ such that $f_{S_{j-1}}(S_j) \geq v_j$ for all $j \in [k]$. Thus, if a value $v$  does not have a valid partitioning $v_1, \ldots, v_k$, then there are no solution $S$ of size $k$ such that $f(S) \geq v$. This implies that
the following function $\underline g : \R \rightarrow \N$ lower bounds the dual objective (Lemma~\ref{lem:lb}):
$$\underline g(v) = \min\{k: \exists \text{ a valid partition } v_1,  \ldots, v_k \text{ of } v\}.$$
We use the lower bound $\ug(v)$ on the dual objective to construct a method that returns an upper bound on $\OPT$. Method~\ref{alg:sm} iteratively constructs a valid partition $v_1,  \ldots, v_k$ of the dual space such that $\sum_{j \in [k]} v_j$ is maximized. It first orders the elements $a_i \in N$ by decreasing singleton value $f(a_i)$. Then, at each iteration $j$, it defines value $v_j$ to be the maximum value $v$ such that  partition $v_1,  \ldots, v_j$ is a valid partition with witness $W = A_{i_j}$.

\begin{algorithm}[H]
	\caption{Method for submodular functions}
	\label{alg:sm}
	\begin{algorithmic}
		\INPUT   function $f$,  cardinality constraint $k$
		\STATE  $(a_1, \ldots, a_n) \leftarrow$ elements ordered by decreasing $f(a_i)$
		\STATE \textbf{For} $j = 1$ to $k$ \textbf{do}
		\STATE \qquad $v_j \leftarrow \max\{v: f(a_{i_j}) \geq v \text{ where }$
		\STATE \qquad \qquad \qquad \quad  $i_j  = \min \{i: f(A_i) - \sum_{\ell = 1}^{j-1} v_\ell \geq  v \}\}$
		\STATE \textbf{return}  $\sum_{j=1}^k v_j$
	\end{algorithmic}
\end{algorithm}

Value $v_j$ at iteration $j$ can be found by iterating through  elements indexed by $i \in \{i_{j-1}+1, i_{j-1}+2, \ldots \}$ until $i^\star$, where $i^{\star}$ is the minimum index such that $f(A_{i^{\star}}) - \sum_{\ell = 1}^{j-1} v_\ell \geq f(a_{i^{\star}})$. If $f(A_{i^{\star}-1}) - \sum_{\ell = 1}^{j-1} v_\ell < f(a_{i^{\star}})$, we then have  $v_j = f(a_{i^{\star}})$, otherwise we decrement $i^\star$ by one and let $v_j  = f(A_{i^{\star}}) - \sum_{\ell = 1}^{j-1} v_\ell$. 
Since an element $a_i$ is considered at most once over all iterations, the total running time of the for loop  is $O(n)$. Thus, the running time of Method~\ref{alg:sm} is $O(n \log n)$ due to the sorting of the elements by singleton value.
More details on finding value $v_j$ in Appendix \ref{app:pseudo}.

\paragraph{The analysis.} 
We first show that $\ug(v)$ lower bounds $g(v)$ in Lemma~\ref{lem:lb}. The proof uses the fact that $\ug(v)$ is monotone (Lemma~\ref{lem:monotone}). 
\begin{lemma}
	\label{lem:monotone}
	$\ug(v)$ is monotonically increasing.
\end{lemma}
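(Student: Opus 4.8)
The plan is to prove monotonicity directly from the definition of $\ug$: given $v \le v'$, I would take an optimal valid partition of $v'$ and truncate it into a valid partition of $v$ that uses no more parts, so that $\ug(v) \le \ug(v')$. If $v'$ admits no valid partition then $\ug(v') = \infty$ and the inequality is trivial, so I assume $k' := \ug(v') < \infty$ and fix a valid partition $v_1', \ldots, v_{k'}'$ of $v'$ with witness $W \subseteq N$. Writing $S_j' = \sum_{\ell=1}^j v_\ell'$ for the prefix sums, we have $S_{k'}' = v'$; set $m = \min\{j : S_j' \ge v\}$ (the case $v = 0$ being trivial). Since $v \le v' = S_{k'}'$ we get $m \le k'$, and by minimality of $m$ we have $S_{m-1}' < v \le S_m'$.

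Next I would define the candidate partition of $v$ by keeping the first $m-1$ parts intact and shrinking the $m$-th: set $v_j = v_j'$ for $j < m$ and $v_m = v - S_{m-1}'$, so that $0 < v_m \le v_m'$ and $\sum_{j=1}^m v_j = v$. It remains to verify that $v_1, \ldots, v_m$ is a valid partition of $v$ using the same witness $W$. Let $i_j$ and $i_j'$ denote the indices from the validity definition for the new and old partitions respectively. For $j < m$ the prefix sums agree, $\sum_{\ell=1}^j v_\ell = S_j'$, hence $i_j = i_j'$ and the required inequality $f(a_{i_j}) = f(a_{i_j'}) \ge v_j' = v_j$ simply carries over from the validity of the old partition.

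The only part needing a genuine argument is $j = m$. Here the new prefix sum is $\sum_{\ell=1}^m v_\ell = v \le S_m'$, so the threshold defining $i_m = \min\{i : a_i \in W,\, f(W \cap A_i) \ge v\}$ is no larger than the one defining $i_m'$; lowering the threshold can only enlarge the feasible set of indices, giving $i_m \le i_m'$. Because $N$ is indexed by decreasing singleton value, a smaller index carries a larger singleton value, so $f(a_{i_m}) \ge f(a_{i_m'}) \ge v_m' \ge v_m$, which is precisely the validity constraint; well-definedness of $i_m$ follows since $f(W) \ge v' \ge v$. This shows $v_1, \ldots, v_m$ is a valid partition of $v$, whence $\ug(v) \le m \le k' = \ug(v')$.

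I expect the main obstacle to be exactly this handling of the index $i_m$: one must argue that truncating the final part can only \emph{decrease} the relevant index and hence can only \emph{increase} the associated singleton value, so that the constraint $f(a_{i_m}) \ge v_m$ is preserved even though $v_m < v_m'$. The ordering of $N$ by decreasing singleton value is what converts this monotonicity of indices into the needed inequality, and the first $m-1$ parts cause no difficulty only because their prefix sums are left untouched by the truncation.
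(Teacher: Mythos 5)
Your proof is correct and follows essentially the same route as the paper's: truncate a minimum valid partition of the larger value at the first prefix sum reaching $v$ and shrink the final part, reusing the same witness. The paper phrases this as a contradiction and glosses over the verification that the truncated partition remains valid, whereas you supply the missing detail (that lowering the threshold can only decrease $i_m$ and hence, by the decreasing-singleton-value ordering, only increase $f(a_{i_m})$), so your write-up is if anything more complete.
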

\begin{proof}
We prove this by contradiction. Let $v \geq u$. We assume that $\ug(v) < \ug(u)$, i.e. the minimal valid partition of $v$ is smaller than the minimal valid partition of $u$.

By definition of $\ug$, there exists a valid partition $v_1,.., v_{\ug(v)}$ of $v$ with witness $S$. Since $u < v$, for some $i < \ug(v)$, $\sum_{j=1}^i v_j < u < \sum_{j=1}^{i+1} v_j$. By the definition of partition of $v$, for all $j \in [i]$, $S$ is a witness of partition $v_1,..,v_i$. Additionally, $v' = u - \sum_{j=1}^i v_i <= v_{i+1}$ and we have that $v_1,...v_i, v'$ is a valid partition of $u = \sum_{j=1}^i v_j + v'$ with partition size $k$, where $k \leq \ug(v) < \ug(u)$. This contradicts the fact that $\ug(u)$ minimal valid partition size for value $u$. Thus, we have $\ug(u) \leq \ug(v)$ for $v \geq u$.
\end{proof}

\begin{lemma}\label{lem:lb}
	For any submodular function $f$ and  $v \in \R$, $\ug(v) \leq g(v) = \min_{S \subseteq N: f(S) \geq v} |S|.$
\end{lemma}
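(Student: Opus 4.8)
The plan is to exhibit, for the set $S^\star$ that attains the minimum in $g(v)$, a valid partition of its value $f(S^\star)$ of size exactly $|S^\star| = g(v)$, and then to transfer this bound from $f(S^\star)$ down to $v$ using monotonicity of $\ug$. Concretely, let $S^\star \in \argmin_{S \subseteq N : f(S) \geq v} |S|$, so that $f(S^\star) \geq v$ and $|S^\star| = g(v)$. Since $\ug$ is monotonically increasing (Lemma~\ref{lem:monotone}) and $v \leq f(S^\star)$, it suffices to prove $\ug(f(S^\star)) \leq |S^\star|$; the chain $\ug(v) \leq \ug(f(S^\star)) \leq |S^\star| = g(v)$ then yields the claim. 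Here I use the standard normalization $f(\emptyset) = 0$, which also holds automatically for coverage functions.

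To bound $\ug(f(S^\star))$ I would follow the recipe indicated just before the lemma. Write $k = |S^\star|$, order the elements of $S^\star$ by decreasing singleton value as $e_1, \ldots, e_k$, and let $S^\star_j = \{e_1, \ldots, e_j\}$ be the $j$ elements of $S^\star$ of largest singleton value, so that $\emptyset = S^\star_0 \subseteq S^\star_1 \subseteq \cdots \subseteq S^\star_k = S^\star$. Define the marginals $v_j = f(S^\star_j) - f(S^\star_{j-1}) = f_{S^\star_{j-1}}(e_j)$. These are nonnegative by monotonicity and telescope to $\sum_{j=1}^k v_j = f(S^\star)$. I would then take the witness $W = S^\star$ and verify the two requirements of a valid partitioning: that $\sum_{j} v_j = f(S^\star)$ (immediate) and that $f(a_{i_j}) \geq v_j$ for each $j$, where $i_j = \min\{i : a_i \in W,\ f(W \cap A_i) \geq \sum_{\ell=1}^j v_\ell\}$.

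The verification of the witness inequality is the crux. Because the global ground set $N$ is ordered by decreasing singleton value, the elements of $W = S^\star$ occur in the global order exactly as $e_1, e_2, \ldots, e_k$; hence $W \cap A_i = S^\star_m$, where $m$ counts the elements of $S^\star$ whose global index is at most $i$, and the cumulative threshold equals $\sum_{\ell=1}^j v_\ell = f(S^\star_j)$. The subtlety is that $i_j$ is defined by a $\min$, so when $f$ is flat (i.e.\ $f(S^\star_m) = f(S^\star_j)$ for some $m < j$) the index $i_j$ may be the global position of $e_m$ rather than of $e_j$, with $m \leq j$; note the set being minimized over is nonempty, since the global index of $e_j$ always satisfies the condition, so $i_j$ is well defined. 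I would resolve this by combining two facts: submodularity gives $v_j = f_{S^\star_{j-1}}(e_j) \leq f(e_j)$, and the decreasing-singleton ordering gives $f(e_m) \geq f(e_j)$ for $m \leq j$. Therefore, whichever $e_m$ with $m \leq j$ realizes $i_j$, we obtain $f(a_{i_j}) = f(e_m) \geq f(e_j) \geq v_j$, as required. This shows $v_1, \ldots, v_k$ is a valid partition of $f(S^\star)$ with witness $S^\star$, whence $\ug(f(S^\star)) \leq k = |S^\star|$, completing the argument. I expect the main obstacle to be precisely locating where the $\min$ defining $i_j$ lands and arguing the inequality uniformly over that range of indices; once the ordering is exploited this reduces to the one-line submodularity estimate above.
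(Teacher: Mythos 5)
Your proof is correct and follows essentially the same route as the paper's: order $S^\star$ by decreasing singleton value, take the telescoping marginals $v_j = f_{S^\star_{j-1}}(e_j)$ as the partition with witness $W = S^\star$, verify validity via submodularity and the global ordering, and finish with monotonicity of $\ug$ (Lemma~\ref{lem:monotone}). The only substantive difference is in locating $i_j$: the paper pins it down as exactly the global index of the $j$-th element by invoking strict positivity of the marginals, whereas you allow the minimum to land at an earlier element $e_m$ with $m \leq j$ and use $f(e_m) \geq f(e_j) \geq v_j$ --- a slightly more robust treatment of the same edge case.
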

\begin{proof}
	Consider any set $S = \{a'_{1}, \ldots, a'_{|S|}\}$ indexed so that $f(a'_{i}) \geq f(a'_{j})$ for all $i < j$. We consider the partition $v_1,  \ldots, v_{|S|}$ of $f(S)$ defined by $v_j = f_{S \cap A_{i_{j-1}}}(a_{i_j}) = f_{\{a_{i_1}, \ldots a_{i_{j-1}}\}}(a_{i_j})$.
	
	Note that by definition of $S$, $f_{S \setminus a'}(a') > 0$ for all $a' \in S$. By submodularity, this implies  $f_{S \cap A_{i-1}}(a'_i) > 0$ for all $a'_i \in S$. Thus, 
	$f(S \cap A_i) < f(S \cap A_{i_j}) = \sum_{\ell  = 1}^j v_\ell$ for all $i < i_j$. This implies  $i_j = \min\{i: a'_i \in S, f(S \cap A_i) \geq \sum_{\ell = 1}^j v_\ell\}.$ By submodularity, $f(a_{i_j}) \geq f_{S \cap A_{i_{j-1}}}(a_{i_j}) = v_j$. Thus, $v_1,  \ldots, v_{|S|}$, with witness $W=S$, is a valid partition of $f(S)$. 
	
	Using Lemma \ref{lem:monotone} and considering the case where $S^{\star} = \argmin_{S \subseteq N: f(S) \geq v} |S|$, we get
	\begin{align*}
		\ug(v) \leq \ug(f(S^{\star})) \leq |S^{\star}| &= \min_{S \subseteq N: f(S) \geq v} |S|. \qedhere
	\end{align*}
\end{proof}
%

We now show that the value $\sum_{j=1}^k v_j$ returned by the method is the maximum value $v$ such that $\ug(v) \leq k$ (Lemma \ref{lem:ub}). The proof of Lemma~\ref{lem:ub} utilizes Lemma~\ref{lem:witness}, which shows that for any valid partition, there is an index $i$ such that set $A_i$ is a witness of that partition.
\begin{lemma}
	\label{lem:witness}
	For any $v \in \R$ and for any valid partition $v_1,  \ldots, v_k$ of $v$, there exists $i$ such that $A_i$ is a witness of  partition $v_1,  \ldots, v_{k}$.
\end{lemma}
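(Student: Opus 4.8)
The plan is to show that the prefix set $A_m$ with $m := \max_{j \in [k]} i_j$ serves as a witness, where $i_1, \ldots, i_k$ are the breakpoint indices induced by the given witness $W$, i.e.\ $i_j = \min\{i : a_i \in W,\ f(W \cap A_i) \geq \sum_{\ell=1}^j v_\ell\}$. First I would observe that when the candidate witness is a prefix $A_m$, the breakpoint indices simplify: since $a_i \in A_m$ exactly when $i \leq m$, and $A_m \cap A_i = A_i$ for all such $i$, the index associated to $A_m$ at stage $j$ is $i'_j = \min\{i \leq m : f(A_i) \geq \sum_{\ell=1}^j v_\ell\}$. By the definition of a valid partition it then suffices to verify that each $i'_j$ is well defined and that $f(a_{i'_j}) \geq v_j$ for every $j \in [k]$.

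Next I would reduce both requirements to a single comparison of indices, using the two structural facts available: monotonicity of $f$ and the decreasing order of the singleton values $f(a_1) \geq \cdots \geq f(a_n)$. Writing $\sigma_j := \sum_{\ell=1}^j v_\ell$, the witness property of $W$ gives $f(W \cap A_{i_j}) \geq \sigma_j$, and since $W \cap A_{i_j} \subseteq A_{i_j} \subseteq A_m$ (as $i_j \leq m$), monotonicity yields both $f(A_{i_j}) \geq \sigma_j$ and $f(A_m) \geq \sigma_j$. The latter shows $i'_j$ is well defined, since $i = m$ is already a candidate in the minimization; the former shows that $i_j$ itself is a candidate, so $i'_j \leq i_j$.

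Finally, because elements are ordered by decreasing singleton value, $i'_j \leq i_j$ gives $f(a_{i'_j}) \geq f(a_{i_j})$, and the witness property of $W$ gives $f(a_{i_j}) \geq v_j$; chaining these yields $f(a_{i'_j}) \geq v_j$ for all $j$, so $A_m$ is a valid witness. I expect the main obstacle to be notational rather than conceptual: one must unwind the definition of the breakpoint indices carefully for the prefix witness, in particular identifying $A_m \cap A_i$ with $A_i$ for $i \leq m$ and confirming the defining minimization ranges over a nonempty set. Once the reduction to proving $i'_j \leq i_j$ is in place, the single monotonicity inequality $f(A_i) \geq f(W \cap A_i)$ does all of the real work.
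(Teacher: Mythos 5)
Your proof is correct, and it reaches the conclusion by a more direct route than the paper. The paper's proof is a local augmentation argument: it takes a witness $W$ with a ``gap'' (some $a_i \notin W$ while $a_j \in W$ for $j > i$), shows that $W \cup \{a_i\}$ is still a witness because adding an element can only decrease each breakpoint index (so, by the decreasing ordering of singleton values, $f(a_{i'_j}) \geq f(a_{i_j}) \geq v_j$), and implicitly iterates this until the witness becomes a prefix. You instead name the prefix explicitly, $A_m$ with $m = \max_j i_j$, and verify the witness property in one shot. The two arguments rest on exactly the same pair of facts --- monotonicity of $f$ forces $i'_j \leq i_j$ when the witness grows, and the ordering $f(a_1) \geq \cdots \geq f(a_n)$ converts that into $f(a_{i'_j}) \geq v_j$ --- but your version has the advantage of being self-contained: it exhibits the witness rather than appealing to an unstated induction on the number of gaps, and it explicitly checks that the new indices $i'_j$ are well defined (the minimization is over a nonempty set since $i = i_j \leq m$ is itself a candidate), a point the paper's proof glosses over. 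The paper's local-swap formulation buys nothing extra here, so your one-step construction is arguably the cleaner write-up of the same idea.
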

\begin{proof}
	Consider a witness $W$ of a valid partition $v_1,  \ldots, v_{k}$ of $v$ such that $a_i \not \in W$ and $a_j \in W$ for $i < j$. We claim that $W \cup a_i$ is also a witness of partition $v_1,  \ldots, v_{k}$. Let $i_1, \ldots, i_{k}$ and $i'_1, \ldots, i'_{k}$ be the indices of $W$ and $W \cup a_i$ respectively for a valid partition of $v$. By monotonicity, we have $i_j \geq i'_j$ for all $j \in [k]$. By the ordering of the indices of the elements by decreasing singleton values, we get
	$f(a_{i'_j}) \geq f(a_{i_j}) \geq v_j$. Thus, $W \cup a_i$ is indeed a witness of partition $v_1,  \ldots,  v_{k}$. 
\end{proof}
	
\begin{lemma}\label{lem:ub}
	Let $\ubOPT = \sum_{j=1}^k v_j$ be the solution returned by Method~\ref{alg:sm}, then 
	$\ug(\ubOPT +\epsilon) > k$ for all $\epsilon > 0$.
\end{lemma}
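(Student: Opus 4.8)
The plan is to show that the partition $v_1, \ldots, v_k$ produced by Method~\ref{alg:sm} attains the largest total value $\sum_j v_j$ among all valid $k$-partitions; since $\ug(v) \le k$ holds precisely when $v$ admits a valid partition into at most $k$ parts, this immediately yields $\ug(\ubOPT + \epsilon) > k$. First I would record a normalization: a valid partition with fewer than $k$ parts can be padded with zero-valued parts into a valid $k$-partition of the same total value, because the appended indices $i_j$ stay equal to the last genuine index and $f(a_{i_j}) \ge 0$ keeps validity. Hence it suffices to prove that no value $v > \ubOPT$ admits a valid $k$-partition.

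Next I would reduce to witnesses of the form $W = N$. Lemma~\ref{lem:witness} lets me replace any witness by a prefix $A_i$, and monotonicity of $f$ shows that enlarging a witness only weakly decreases each index $i_j$ (both the membership condition and the threshold condition become easier), hence can only increase $f(a_{i_j})$; so $W = N = A_n$ is a witness whenever any witness exists. With $W = N$, a valid $k$-partition $u_1, \ldots, u_k$ of $v$ has indices $i_j^u = \min\{i : f(A_i) \ge U_j\}$, where $U_j := \sum_{\ell \le j} u_\ell$, subject to $f(a_{i_j^u}) \ge u_j$. The method's own partition has exactly this structure: writing $V_j := \sum_{\ell \le j} v_\ell$, its index at step $j$ is $p_j = \min\{i : f(A_i) \ge V_j\}$, and $v_j$ is the \emph{largest} value $v$ for which $f(a_p) \ge v$ at $p = \min\{i : f(A_i) \ge V_{j-1} + v\}$.

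The core is a prefix-domination induction establishing $V_m \ge U_m$ for all $m \in \{0, \ldots, k\}$. The base case $m = 0$ is trivial. For the step, assume $V_{m-1} \ge U_{m-1}$ and set $w := U_m - V_{m-1}$. If $w < 0$ then $V_m \ge V_{m-1} > U_m$ since $v_m \ge 0$, so suppose $w \ge 0$; then $w = u_m - (V_{m-1} - U_{m-1}) \le u_m$. I would check that the candidate value $v = w$ is feasible for the method's maximization at step $m$: its induced index is $\min\{i : f(A_i) \ge V_{m-1} + w\} = \min\{i : f(A_i) \ge U_m\} = i_m^u$, which is well defined because the competitor partition is valid, and the method's constraint $f(a_{i_m^u}) \ge w$ then follows from the competitor's validity via $f(a_{i_m^u}) \ge u_m \ge w$. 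By maximality of $v_m$ we conclude $v_m \ge w$, i.e. $V_m \ge U_m$, completing the induction.

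Taking $m = k$ gives $\ubOPT = V_k \ge U_k = v$ for every valid $k$-partition of every value $v$; equivalently, no $v > \ubOPT$ admits a valid $k$-partition, and by the padding normalization no such $v$ admits a valid partition into at most $k$ parts, so $\ug(\ubOPT + \epsilon) > k$ for all $\epsilon > 0$. The step I expect to be the main obstacle is this exchange argument: making precise that being ``behind'' on cumulative value ($V_{m-1} \le U_{m-1}$) lets the method reach the competitor's next cumulative level $U_m$ with a smaller marginal $w \le u_m$, so that the weakly smaller induced index preserves the singleton-value constraint $f(a_{i_m^u}) \ge w$. The surrounding bookkeeping (reduction to $W = N$, well-definedness of the indices, the $w < 0$ case, and zero-padding) is routine but must be stated carefully.
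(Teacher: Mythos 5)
Your proof is correct, and it is considerably more complete than the paper's own argument, which is only a sketch: the paper assumes a valid $k$-partition $v'_1,\ldots,v'_k$ of $\ubOPT+\epsilon$ with witness $A_i$ (via Lemma~\ref{lem:witness}), takes the minimum index $j^\star$ with $v'_{j^\star}>v_{j^\star}$, and asserts a contradiction with Method~\ref{alg:sm} without spelling it out. Your pivot is different and, arguably, the right one: a part-by-part comparison at $j^\star$ does not immediately contradict the method's maximality, because the method's candidate index for the value $v'_{j^\star}$ is computed relative to its own prefix sum $V_{j^\star-1}$, which may exceed the competitor's $U_{j^\star-1}$; this pushes the induced index later and makes $f(a_{i})$ smaller, so the competitor's validity does not directly certify feasibility of $v'_{j^\star}$ for the method. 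One really needs to compare cumulative levels, which is exactly what your induction $V_m\ge U_m$ with the reduced marginal $w=U_m-V_{m-1}\le u_m$ accomplishes: the index needed to reach level $U_m$ is the competitor's $i^u_m$ regardless of how the level was reached, and $f(a_{i^u_m})\ge u_m\ge w$ certifies feasibility. You also handle two points the paper glosses over: padding valid partitions with fewer than $k$ parts by zero-valued parts (needed since $\ug(v)\le k$ only guarantees a partition with at most $k$ parts), and upgrading the witness from some prefix $A_i$ to all of $N$ so that the competitor's indices take the same closed form as the method's. Both proofs rest on the same two ingredients, Lemma~\ref{lem:witness} and the maximality of each $v_j$, so the strategies agree in spirit, but your cumulative-sum exchange argument supplies the substantive content that the paper's proof leaves implicit.
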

\begin{proof}
	 Assume by contradiction that there exists $\epsilon > 0$ such that $\ug(\ubOPT + \epsilon) \leq k$. Then there exists a valid partition $v_1' ,  \ldots,  v'_k$ of $\ubOPT + \epsilon = \sum_j v'_j$ with witness $A_{i}$ for some $i$ by Lemma 7. Let $j^{\star}$ be the minimum index $j$ such that $v'_j > v_j$. Then contradiction with the method and the definition of valid partitioning.
	\end{proof}
Finally, we combine Lemma~\ref{lem:lb} and Lemma~\ref{lem:ub} to show that  Method \ref{alg:sm} returns an upper bound of \texttt{OPT}.
\begin{theorem}
	\label{thm:sm1}
	For any $k$, Let $\ubOPT = \sum_{j=1}^k v_j$ be the solution returned by Method~\ref{alg:sm}, then  $\ubOPT \geq \max_{S \subseteq N: |S| \leq k} f(S)$.
\end{theorem}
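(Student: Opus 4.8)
The plan is to combine the lower bound Lemma~\ref{lem:lb} with the characterization of the returned value in Lemma~\ref{lem:ub} via a short proof by contradiction. Let $S^{\star} \in \argmax_{S \subseteq N : |S| \leq k} f(S)$ and write $\OPT = f(S^{\star})$, so that $|S^{\star}| \leq k$. I want to conclude $\ubOPT \geq \OPT$, and I would do this by assuming $\ubOPT < \OPT$ and deriving a contradiction by evaluating $\ug$ at the single point $\OPT$ in two incompatible ways.

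First I would use the assumption to produce a gap: if $\ubOPT < \OPT$, set $\epsilon := \OPT - \ubOPT > 0$, so that $\OPT = \ubOPT + \epsilon$. Applying Lemma~\ref{lem:ub} with this $\epsilon$ yields
\[
\ug(\OPT) = \ug(\ubOPT + \epsilon) > k.
\]
This is the ``dual'' side of the argument: the returned value $\ubOPT$ is precisely the threshold beyond which no valid partition of size $k$ exists, so any value strictly larger than $\ubOPT$ forces $\ug > k$.

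Next I would bound $\ug(\OPT)$ from above using the primal optimum. Since $f(S^{\star}) = \OPT \geq \OPT$, the set $S^{\star}$ is feasible in the minimization defining $g(\OPT) = \min_{S \subseteq N : f(S) \geq \OPT} |S|$, whence $g(\OPT) \leq |S^{\star}| \leq k$. Combining this with the lower bound of Lemma~\ref{lem:lb}, which gives $\ug(\OPT) \leq g(\OPT)$, I obtain
\[
\ug(\OPT) \leq g(\OPT) \leq |S^{\star}| \leq k.
\]
Together with the strict inequality $\ug(\OPT) > k$ from the previous step, this yields $k < \ug(\OPT) \leq k$, a contradiction. Hence $\ubOPT \geq \OPT = \max_{S \subseteq N : |S| \leq k} f(S)$, as claimed.

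I do not expect any genuine obstacle in this final step: all the technical work has already been absorbed into Lemmas~\ref{lem:lb} and~\ref{lem:ub} (and the monotonicity Lemma~\ref{lem:monotone} and witness Lemma~\ref{lem:witness} that support them), so the theorem is essentially a bookkeeping combination of a lower bound and an upper bound on $\ug$ evaluated at $\OPT$. The only point requiring a moment of care is the direction of the strict versus non-strict inequalities — ensuring that the ``$>k$'' from Lemma~\ref{lem:ub} and the ``$\leq k$'' from feasibility of $S^{\star}$ collide cleanly — which is handled by introducing the positive slack $\epsilon$ rather than working with $\OPT$ directly.
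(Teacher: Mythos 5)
Your proposal is correct and follows essentially the same route as the paper: both combine Lemma~\ref{lem:ub} and Lemma~\ref{lem:lb} with the weak-duality observation that an optimal set $S^{\star}$ is feasible for $g(\OPT)$, forcing $g(\OPT) \leq k$. The only cosmetic difference is that you instantiate $\epsilon$ as the gap $\OPT - \ubOPT$ and argue by contradiction, whereas the paper quantifies over all $\epsilon > 0$ and lets $\epsilon \to 0$ implicitly.
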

\begin{proof}
We first show by contrapositive that a lower bound $k < g(v) = \min_{S \subseteq N: f(S) \geq v} |S|$ on the dual problem implies and upper bound $v > \max_{S \subseteq N: |S| \leq k} f(S)$ on the primal problem. 
Assume $\max_{S \subseteq N: |S| \leq k} f(S) \geq v$. Then, there exists $S^\star$ such that $f(S^\star) \geq v$ and $|S^\star| \leq k$ and we get $g(v) = \min_{S \subseteq N: f(S) \geq v} |S| \leq |S^\star| \leq k$. 

Then, by Lemma \ref{lem:ub} and Lemma \ref{lem:lb}, we have 
$$k <  \ug(\ubOPT +\epsilon) \leq g(\ubOPT +\epsilon) = \min_{S \subseteq N: f(S) \geq \ubOPT +\epsilon} |S|$$
which implies $\ubOPT + \epsilon > \max_{|S| \leq k} f(S)$
for all $\epsilon > 0$.
\end{proof}

\subsection{\mainAlg}

We describe our main method, \mainAlg, which uses Method~\ref{alg:sm} as a subroutine. In the case where a small number of elements have very large singleton values, Method~\ref{alg:sm} can return an arbitrarily bad approximation to $\OPT$ (See example in Appendix \ref{app:bad}). To circumvent this issue, \mainAlg \ calls Method~\ref{alg:sm} on the marginal contribution function $f_S(T) = f(S \cup T) - f(S)$ for each $S$ in a collection of sets $\mathcal S$ given as input. If $\{\} \in \S$, then \mainAlg \ is no worse than Method~\ref{alg:sm}. If there is $S \in \S$ such that there are no elements with large singleton value according to $f_S$, then \mainAlg \ circumvents the issue previously mentioned. We  note that adding more sets to $\mathcal S$ can only improve the approximation given by \textsc{Dual}.

\begin{algorithm}[H]
	\caption{\textsc{Dual}}
	\label{alg:3}
	\begin{algorithmic}
		\INPUT   function $f$, constraint $k$, collection of sets $\mathcal S$
		\STATE $\ubOPT \leftarrow f(N)$
		\STATE \textbf{for}  $S$ in $\mathcal S$ \textbf{do}
		\STATE \qquad $\ubOPT' \leftarrow \text{Method}\ref{alg:sm}(f_S, k)$
		\STATE \qquad $\ubOPT \leftarrow \min(f(S) + \ubOPT', \ubOPT)$
		\STATE \textbf{return}  $\ubOPT $
	\end{algorithmic}
\end{algorithm}
\begin{theorem}\label{thm:dual}
	For any set collection of sets $ \mathcal S$,  Method~\ref{alg:3} returns $\ubOPT$ such that $\ubOPT \geq \OPT$.
\end{theorem}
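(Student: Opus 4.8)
The plan is to unwind the definition of Method~\ref{alg:3}: after initializing $\ubOPT \leftarrow f(N)$ and applying the update $\ubOPT \leftarrow \min(f(S) + \ubOPT', \ubOPT)$ across all $S \in \mathcal{S}$, the returned value is exactly
$$\ubOPT = \min\left(f(N),\ \min_{S \in \mathcal{S}}\Big(f(S) + \text{Method}\ref{alg:sm}(f_S, k)\Big)\right).$$
Since a minimum of numbers each at least $\OPT$ is itself at least $\OPT$, it suffices to show that $f(N) \geq \OPT$ and that $f(S) + \text{Method}\ref{alg:sm}(f_S, k) \geq \OPT$ for every $S \in \mathcal{S}$, where $\OPT = \max_{S \subseteq N : |S| \leq k} f(S)$. (Formally this is an induction on the loop iterations, with the invariant that $\ubOPT \geq \OPT$ is preserved by each $\min$.)

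The first bound is immediate from monotonicity: $f(N) \geq f(O)$ for every $O \subseteq N$, and in particular $f(N) \geq \OPT$.

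For the second bound, fix $S \in \mathcal{S}$ and let $O^\star$ be an optimal solution, so $|O^\star| \leq k$ and $f(O^\star) = \OPT$. First I would verify that the marginal contribution function $f_S(T) = f(S \cup T) - f(S)$ inherits monotonicity and submodularity from $f$: monotonicity follows since $S \cup T \subseteq S \cup T'$ whenever $T \subseteq T'$, and submodularity follows from the identity $(f_S)_U(a) = f_{S \cup U}(a)$ together with submodularity of $f$. This is exactly what lets me invoke Theorem~\ref{thm:sm1} on $f_S$, giving
$$\text{Method}\ref{alg:sm}(f_S, k) \geq \max_{T \subseteq N : |T| \leq k} f_S(T) \geq f_S(O^\star) = f(S \cup O^\star) - f(S),$$
where the second inequality uses that $O^\star$ is feasible for the inner maximization. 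Rearranging and applying monotonicity of $f$ one final time yields
$$f(S) + \text{Method}\ref{alg:sm}(f_S, k) \geq f(S \cup O^\star) \geq f(O^\star) = \OPT,$$
as required.

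The individual steps are routine, so there is no serious obstacle; the one point that must not be skipped is that Theorem~\ref{thm:sm1} is stated only for monotone submodular functions, and so the crux is checking that $f_S$ retains these properties before applying it. The conceptual heart of the argument is the choice of $O^\star$ as the test point for the inner maximization, which telescopes against the additive $f(S)$ term, after which monotonicity absorbs the overlap $S \cap O^\star$ by passing from $f(S \cup O^\star)$ to $f(O^\star)$.
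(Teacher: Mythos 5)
Your proof is correct and follows essentially the same route as the paper's: invoke Theorem~\ref{thm:sm1} on the monotone submodular function $f_S$, then use $f(S) + f_S(T) = f(S \cup T) \geq f(T)$ with the optimal set as the test point. You are somewhat more explicit than the paper in verifying that $f_S$ inherits monotonicity and submodularity and in handling the $f(N)$ initialization, but the argument is the same.
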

\begin{proof}
	Since $f$ is monotone submodular, $f_S$ is also monotone submodular for any set $S$. Thus the value $\ubOPT'$ returned by $\text{Method}\ref{alg:sm} (f_S, k)$ is such that $\ubOPT'\geq \max_{|T| \leq k} f_S(T)$ by Theorem \ref{thm:sm1}. By monotonicity we have that $f(S) + f_S(T) = f(T \cup S) \geq f(T)$ for all $T$. We conclude that at each iteration, $f(S) + \ubOPT' \geq \OPT$.
\end{proof}
We also show a guarantee on how far the upper bound $\ubOPT$ given by \textsc{Dual} is to $\OPT$. If $S_g \in \S$, where $S_g$ is the  greedy solution, then $\frac{1}{2} \ubOPT \leq \OPT$. In addition, we also get the stronger guarantee that \mainAlg \ always finds an instance specific approximation  for  greedy that is at least $1/2$.
\begin{proposition}\label{prop:dual}
	Let $S_g$ be the solution retuned by the greedy algorithm to the problem $\max_{|S| \leq k} f(S)$. Then, if $f$ is a monotone submodular function and $S_g \in \S$, \mainAlg \ returns $\ubOPT$ such that  $$\frac{1}{2} \ubOPT \leq f(S_g) \leq \OPT.$$
\end{proposition}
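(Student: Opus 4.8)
The plan is to prove the two inequalities separately. The right inequality $f(S_g) \le \OPT$ is immediate, since the greedy solution is feasible (it has size at most $k$), so $f(S_g) \le \max_{|S| \le k} f(S) = \OPT$. The substance is the left inequality $\frac{1}{2}\ubOPT \le f(S_g)$, equivalently $\ubOPT \le 2f(S_g)$. To establish it, I would first exploit the structure of \mainAlg: because $S_g \in \S$, one iteration of Method~\ref{alg:3} computes $\ubOPT' = \text{Method}\ref{alg:sm}(f_{S_g}, k)$ and updates $\ubOPT \leftarrow \min(f(S_g)+\ubOPT', \ubOPT)$, and since \mainAlg returns the minimum over all iterations, the returned value satisfies $\ubOPT \le f(S_g) + \ubOPT'$. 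Thus it suffices to show $\ubOPT' \le f(S_g)$, which would give $\ubOPT \le 2f(S_g)$.

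Next I would bound $\ubOPT'$. Recall that $\ubOPT' = \sum_{j=1}^k v_j$, where $v_1,\dots,v_k$ is the valid partition produced by Method~\ref{alg:sm} run on $f_{S_g}$. From the definition of $v_j$ inside the method, the value $v_j$ is the maximum $v$ subject (among other things) to the constraint $f_{S_g}(a_{i_j}) \ge v$; hence $v_j \le f_{S_g}(a_{i_j}) \le \max_{a \in N} f_{S_g}(a)$, since $a_{i_j}$ is an element of the ground set. Summing over $j$ gives $\ubOPT' \le k \cdot \max_{a \in N} f_{S_g}(a)$. The remaining task is therefore to show $k \cdot \max_{a \in N} f_{S_g}(a) \le f(S_g)$.

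I would close the argument with two classical facts about greedy. Write $S_g = \{e_1,\dots,e_k\}$ in selection order, with $S_0 = \emptyset$ and $S_i = \{e_1,\dots,e_i\}$. First, the greedy marginals are non-increasing: submodularity and the greedy selection rule give $f_{S_{i}}(e_{i+1}) \le f_{S_{i-1}}(e_{i+1}) \le f_{S_{i-1}}(e_i)$, so the smallest marginal is the last one, $d := f_{S_{k-1}}(e_k)$, and consequently $f(S_g) = \sum_{i=1}^k f_{S_{i-1}}(e_i) \ge k\cdot d$. Second, for every element $a$, submodularity gives $f_{S_g}(a) \le f_{S_{k-1}}(a)$, and the greedy rule at step $k$ gives $f_{S_{k-1}}(a) \le f_{S_{k-1}}(e_k) = d$, so $\max_{a \in N} f_{S_g}(a) \le d$. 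Combining, $\ubOPT' \le k\cdot \max_{a \in N} f_{S_g}(a) \le k\cdot d \le f(S_g)$, which yields $\ubOPT \le 2f(S_g)$ and finishes the proof. (The degenerate case where greedy stops early with all remaining marginals zero is trivial, as then $\max_a f_{S_g}(a) = 0$ and $\ubOPT' = 0 \le f(S_g)$.)

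The main obstacle is the middle step: translating the abstract value $\ubOPT'$ returned by Method~\ref{alg:sm} into a quantity that greedy controls. Once one observes that every term $v_j$ of the valid partition is capped by the largest singleton marginal $\max_{a \in N} f_{S_g}(a)$, the non-increasing-marginals property of greedy does the rest, since that same maximum is bounded by greedy's smallest marginal $d$ while $f(S_g)$ is a sum of $k$ marginals each at least $d$. Everything else is either immediate from feasibility or a direct appeal to Theorem~\ref{thm:dual}.
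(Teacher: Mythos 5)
Your proof is correct and follows essentially the same route as the paper's: reduce to the iteration of \mainAlg{} with $S = S_g$, bound each $v_j$ of Method~3 by the largest singleton marginal of $f_{S_g}$, bound that by greedy's last marginal $f_{S_{k-1}}(a_k)$ via submodularity and the greedy selection rule, and compare $k\,f_{S_{k-1}}(a_k)$ with the telescoping sum $f(S_g)=\sum_{i=1}^k f_{S_{i-1}}(a_i)$. Your write-up is slightly more explicit than the paper's (in particular in justifying $v_j \le f_{S_g}(a_{i_j})$ from the definition of $v_j$), but the argument is the same.
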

\begin{proof}
Let $S_i$ be the \textsc{Greedy} solution set of size $i$ and $a_i$ be the element chosen in the $i$-th iteration and $S_g$ be the solution at iteration $k$.

In the $k$-th iteration of \textsc{Dual}, $\ubOPT \leq f(S_g) + \ubOPT'$, where $\ubOPT'$ is the output of Method \ref{alg:sm} on the marginal contribution function $f_{S_g}$. Since $S_g$ is the \textsc{Greedy} solution, for all $a \in N$, $f_{S_g}(a) \leq f_{S_{k-1}}(a) \leq f_{S_{k-1}}(a_k)$, where $a_k$ has the largest marginal contribution at iteration $k$.

Then we can see that $$\ubOPT' = \sum_{j=1}^k v_j \leq kf_{S_{k-1}}(a_k) \leq \sum_{i=1}^k f_{S_{i-1}} (a_i) = f(S
_g)$$ and that $\ubOPT \leq 2 f(S_g) \leq 2 \texttt{OPT}$, yielding the result desired.
\end{proof}


\section{Experiments}\label{sec:exp}

We utilize  \mainAlg \ to obtain bounds on the approximation achieved by submodular maximization algorithms in practice. We show that   \textsc{Greedy} and other algorithms find solutions that approximate the optimal solution significantly better than  $1-1/e$ on a wide variety of real-world datasets and objectives. We also show that \mainAlg \ outperforms multiple benchmarks for deriving approximations for the solution found by \textsc{Greedy}. For all instances, we use $\mathcal S = \{S_1,S_2, \ldots,S_{20}\} \cup \{S_{25}, S_{30}, \ldots ,S_{50}\}$ as an input to \mainAlg, where $S_i$ is the greedy solution of size $i$. 

\subsection{Approximations for submodular maximization algorithms using \mainAlg}

We begin by evaluating the bounds derived by \mainAlg \ on the approximation achieved by four different submodular maximization algorithms.  
The goal here is
 not to provide a comprehensive comparison of submodular maximization algorithms, but to analyze  the approximations computed by \mainAlg.

\paragraph{Algorithms for submodular maximization.} 
The {\bf \textsc{Greedy}} algorithm obtains the optimal $1-1/e$ approximation \citep{NWF78} and is widely considered as the standard algorithm for monotone submodular maximization under a cardinality constraint. {\bf \textsc{Local search}} obtains a $1/2$ approximation guarantee \citep{NWF78} and is another widely used algorithm. {\bf \textsc{Lazier-than-lazy greedy}}, also called sample greedy, improves the running time of greedy by sampling a small subset of elements at each iteration \cite{MBKV15, buchbinder2015comparing}. {\bf \textsc{Random greedy}} handles submodular functions that are not necessarily monotonic by introducing randomization into the element selection step and obtains a $1-1/e$ approximation guarantee for monotone submoduar functions \cite{BFN14}. We provide further details on these algorithms in Appendix \ref{app:alg}. For randomized algorithms, we average the results over $5$ runs.

 \paragraph{Settings.} We examine the approximations computed by \mainAlg \ for these algorithms on $8$ different datasets and objectives. 
 Additional details can be found in Appendix \ref{app:large}. 

\begin{itemize}
\item {{\bf Influence maximization}: As in \cite{MBK16, BBS18}, we use {\bf Youtube social network} data \cite{YL15} and sample $n=1,045$ users from 50 large communities. We select $k$ users by maximizing coverage: $f(S) = |N_G(S)|$.}
\item {{\bf Car dispatch}: Our goal is to select the $k$ best locations to deploy drivers to cover the maximum number of pickups. As in \cite{BS18, KZK18}, we analyze $n=1,000$ locations of {\bf Uber pickups} \cite{FVE} and assign a weight $w_i$ to each neighborhood $n_i$ that is proportional to the number of trips in the neighborhood.} 
\item {\bf Influence maximization}: We use a {\bf citation network} of Physics collaborations \cite{LKF07} with $n=9,877$ authors (nodes) and 25,998 co-authorships (edges), and maximize $f(S) = |N_G(S)|$.
\item {{\bf Movie recommendation}: As in \cite{MBK16, MBNTC17, BBS18, BBS20}, we use the {\bf MovieLens dataset} \cite{HK16} of $n=3,706$ movies to recommend $k$ movies that have both good overall ratings and are highly rated by the most users.} 
\item {{\bf Facility Location}: As in \cite{LWD16, PST19}, we use facility location objective and the movie ranking matrix $[r_{ij}]$ from the {\bf MovieLens dataset} where $r_{ij}$ is user $j$'s ranking on movie $i$ to select $k$ movies to recommend from $N$ using $f(S) = \frac{1}{|N|} \sum_{i \in N} \max_{j\in U} r_{ij}$.}
\item {\bf Revenue maximization}: We use a revenue maximization objective from \cite{BBS20} on the {\bf CalTech Facebook Network dataset} \cite{TMP12} of 769 Facebook users $N$. 
\item{{\bf Feature selection}: We use the {\bf Adult Income dataset} \cite{BM98} and select a subset of features to predict income level $Y$. We extract 109 binary features as in \cite{KZK18} and use a joint entropy objective to select  features.}
\item {\bf Sensor placement}: As in \cite{KSG08, KMGG08, OY15}, we use the {\bf Berkeley Intel Lab dataset} which comprises of 54 sensors that collect temperature information. 
 We select $k$ sensors that 
 maximize entropy. 
\end{itemize}

 \begin{figure*}[t]
	\centering
	\begin{subfigure}[c]{\textwidth}
		\includegraphics[width=0.24\textwidth]{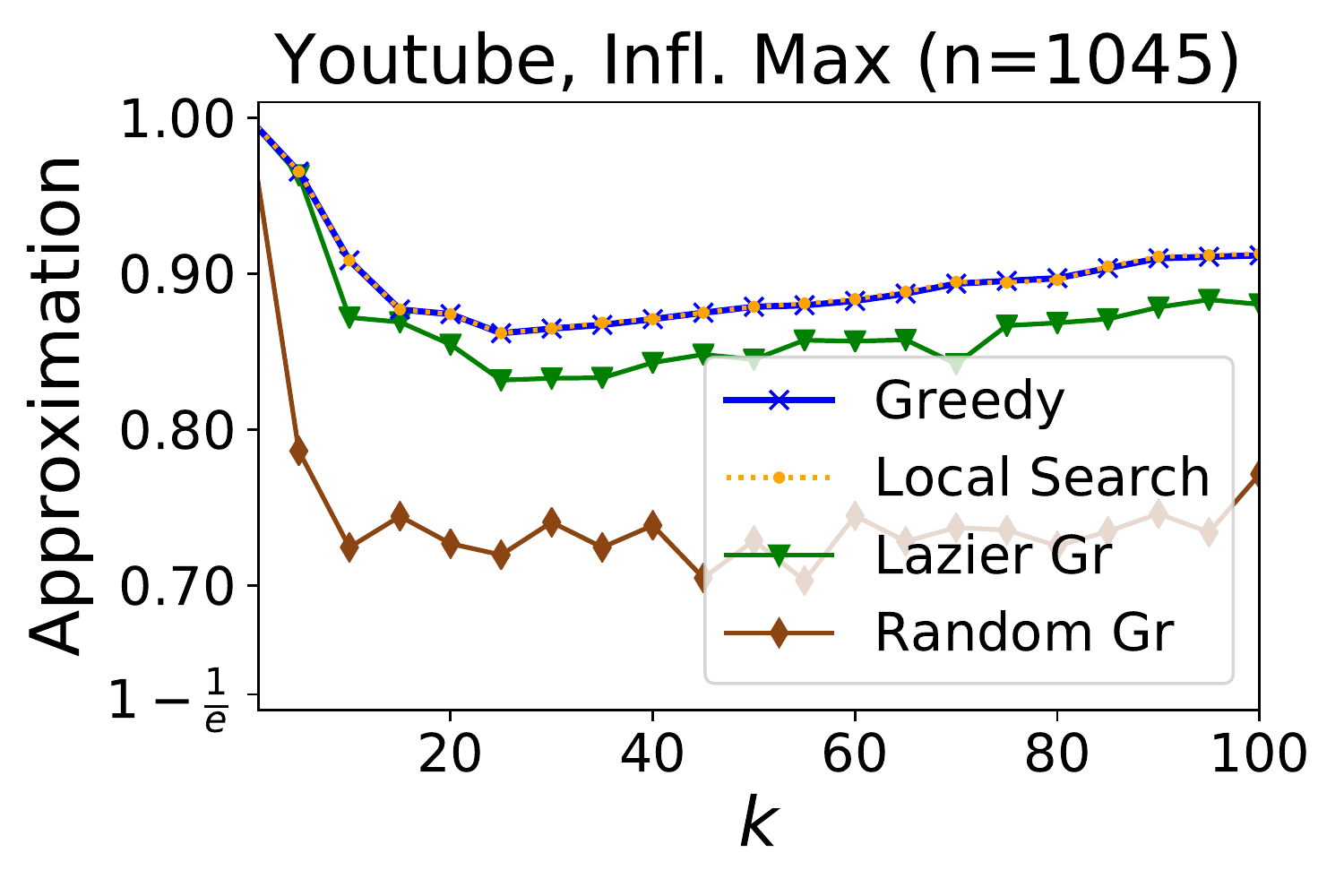}
		\includegraphics[width=0.24\textwidth]{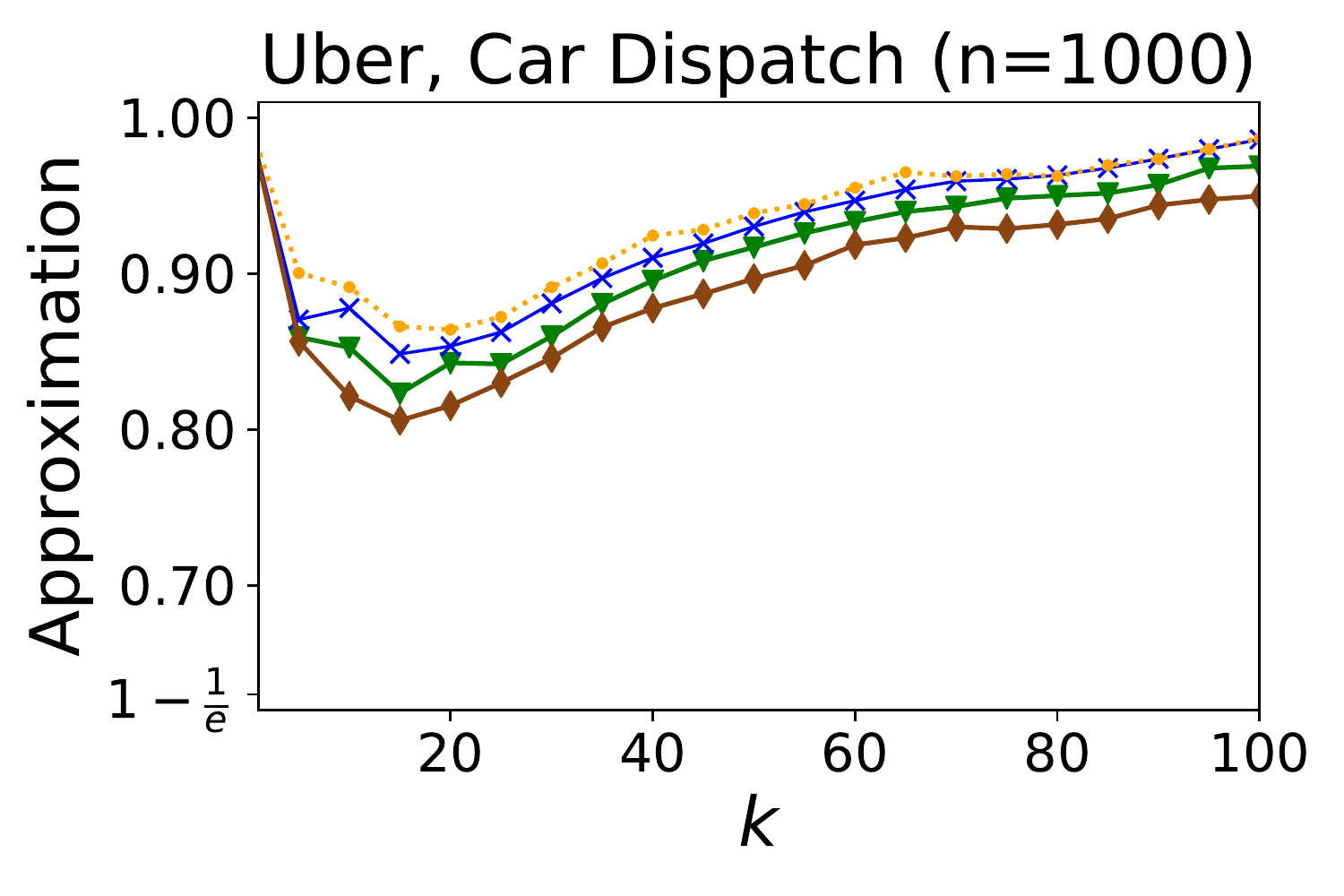}
		\includegraphics[width=0.24\textwidth]{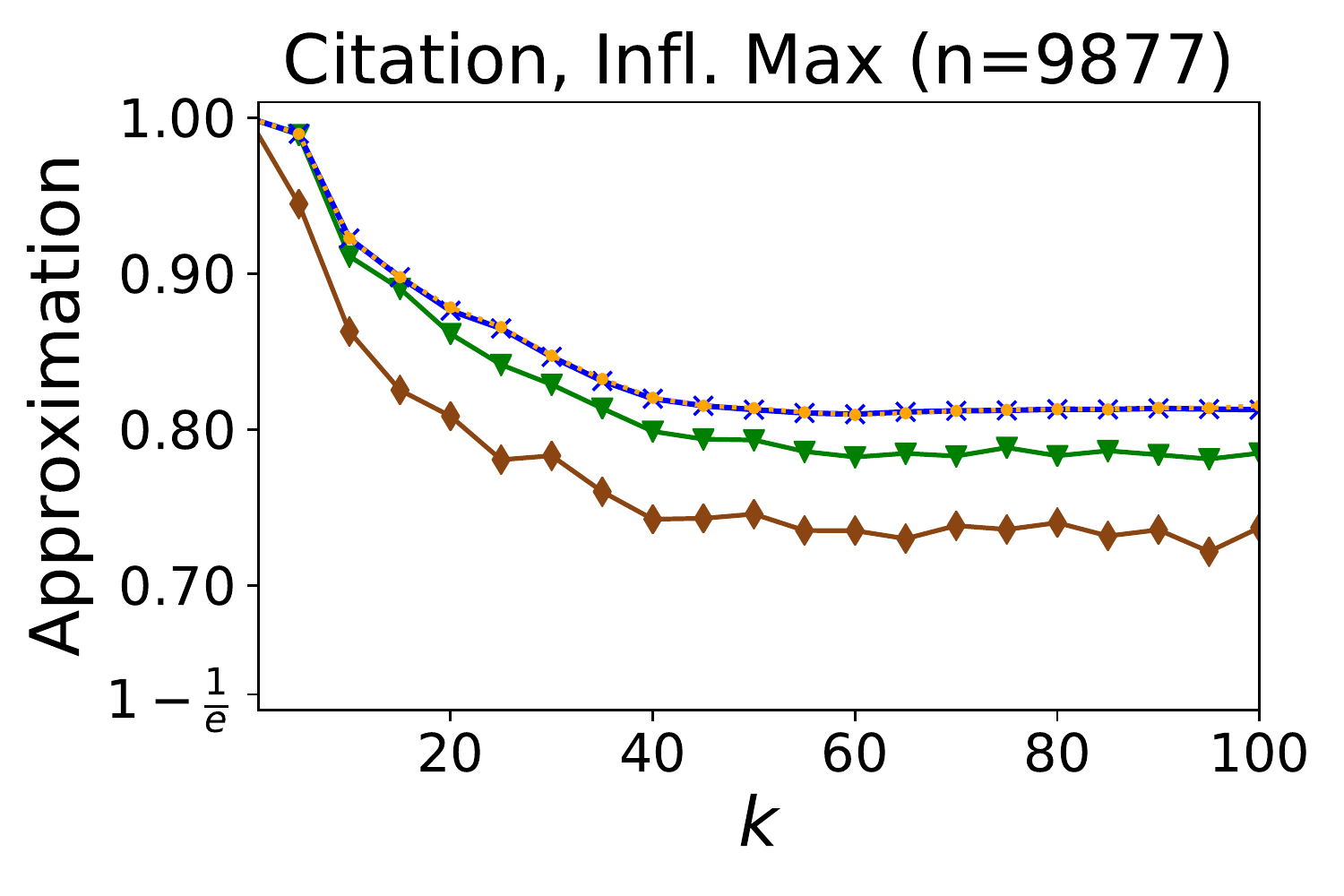}
		\includegraphics[width=0.24\textwidth]{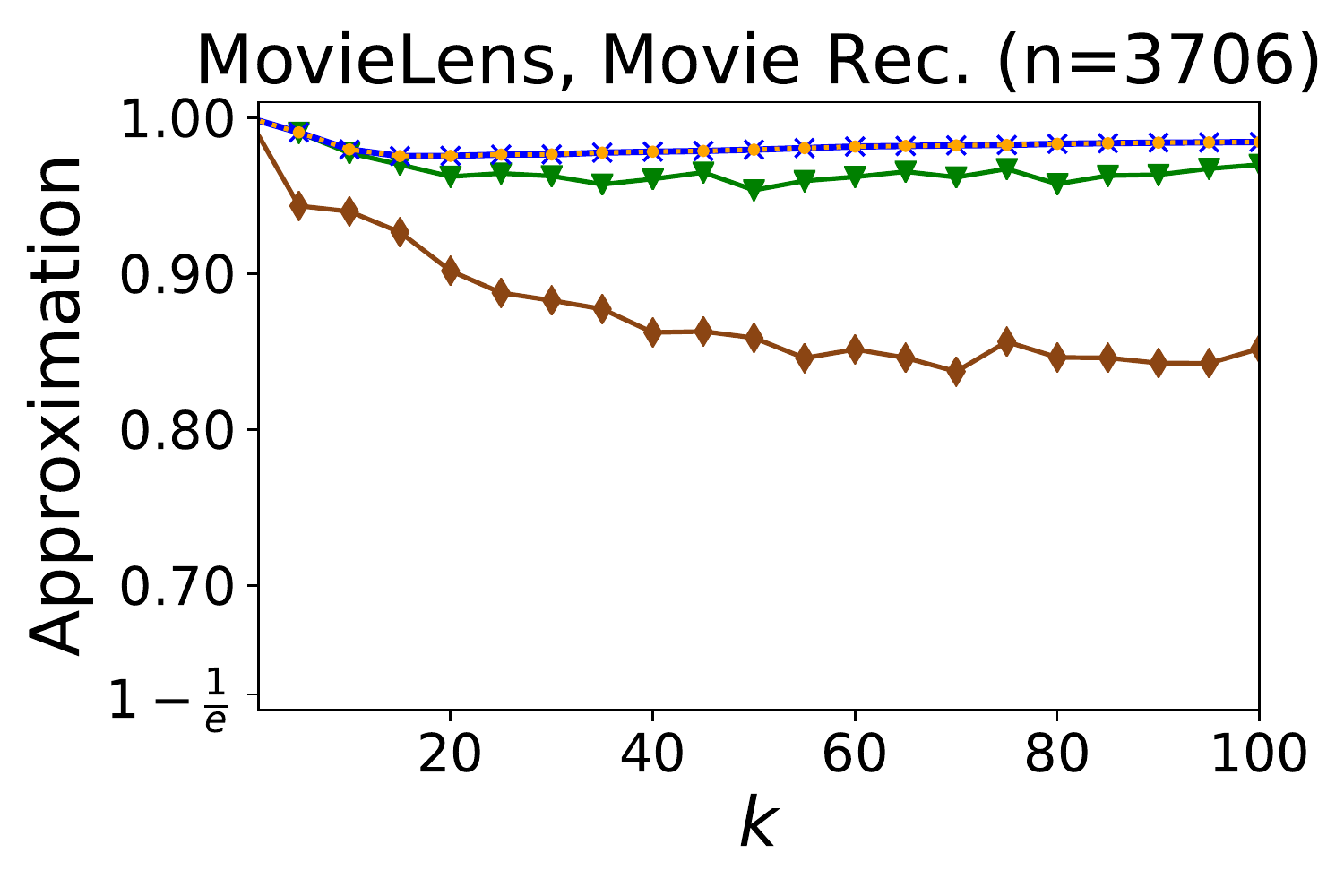}
	\end{subfigure} %
	\begin{subfigure}[c]{\textwidth}
		\includegraphics[width=0.24\textwidth]{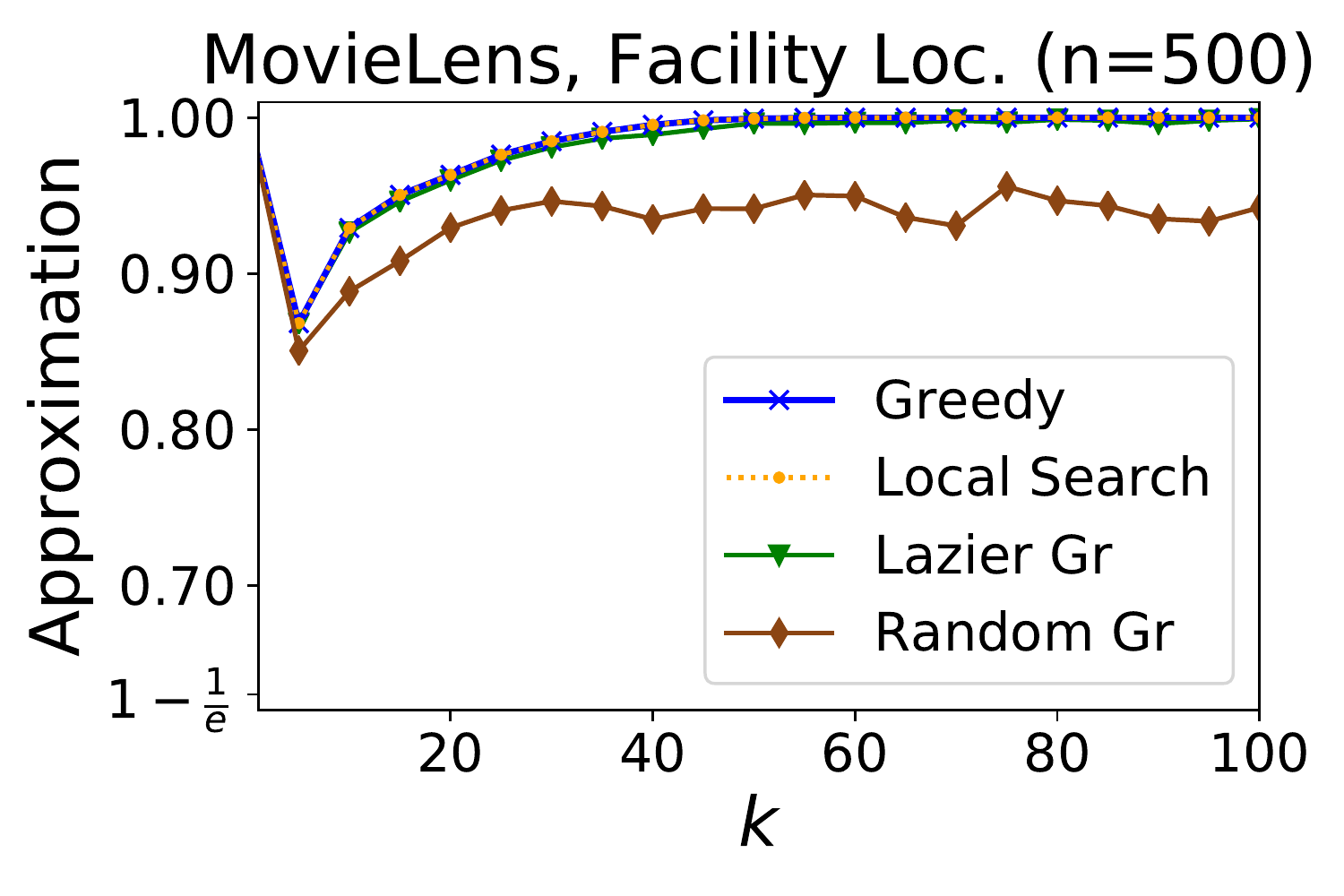}
		\includegraphics[width=0.24\textwidth]{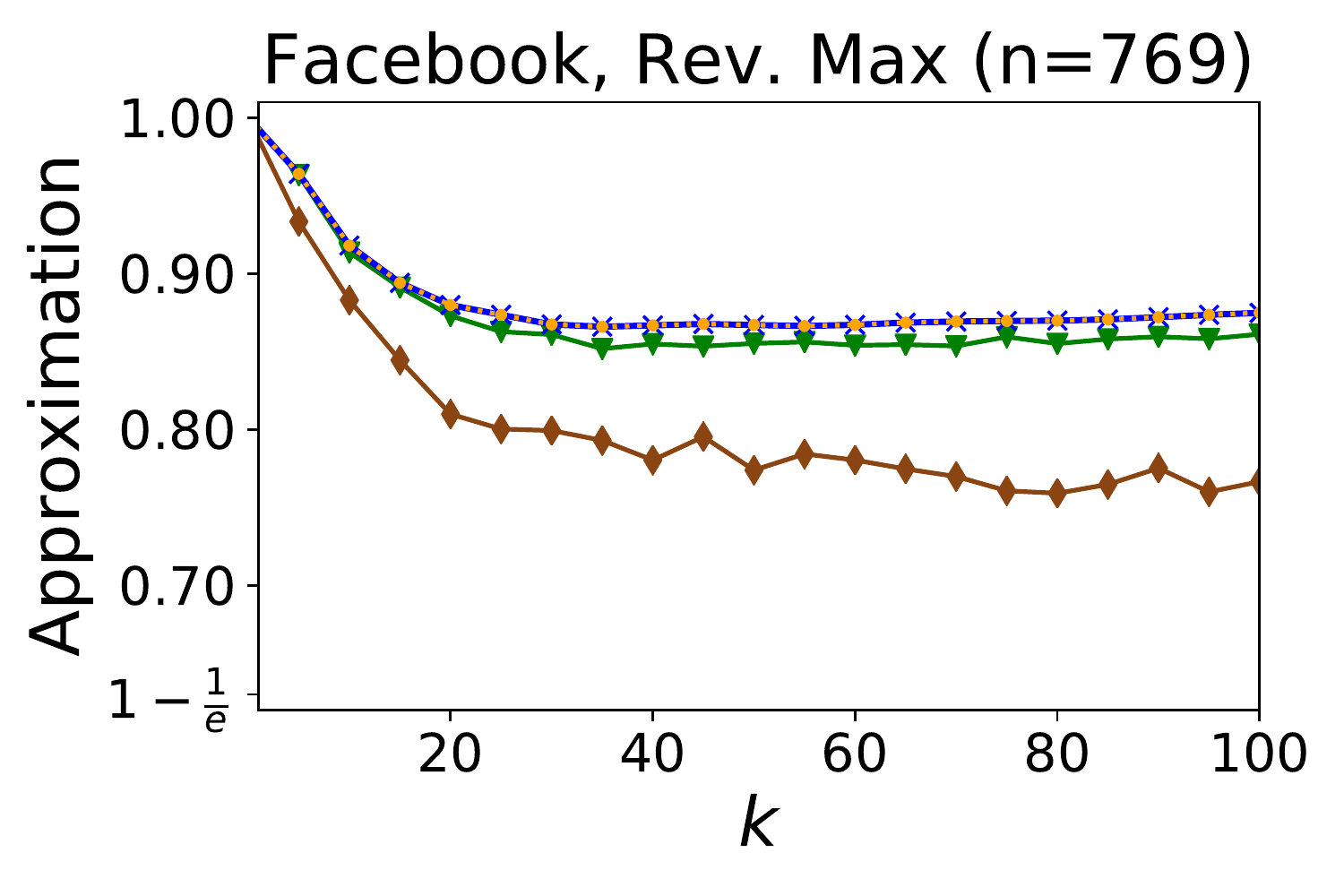}
		\includegraphics[width=0.24\textwidth]{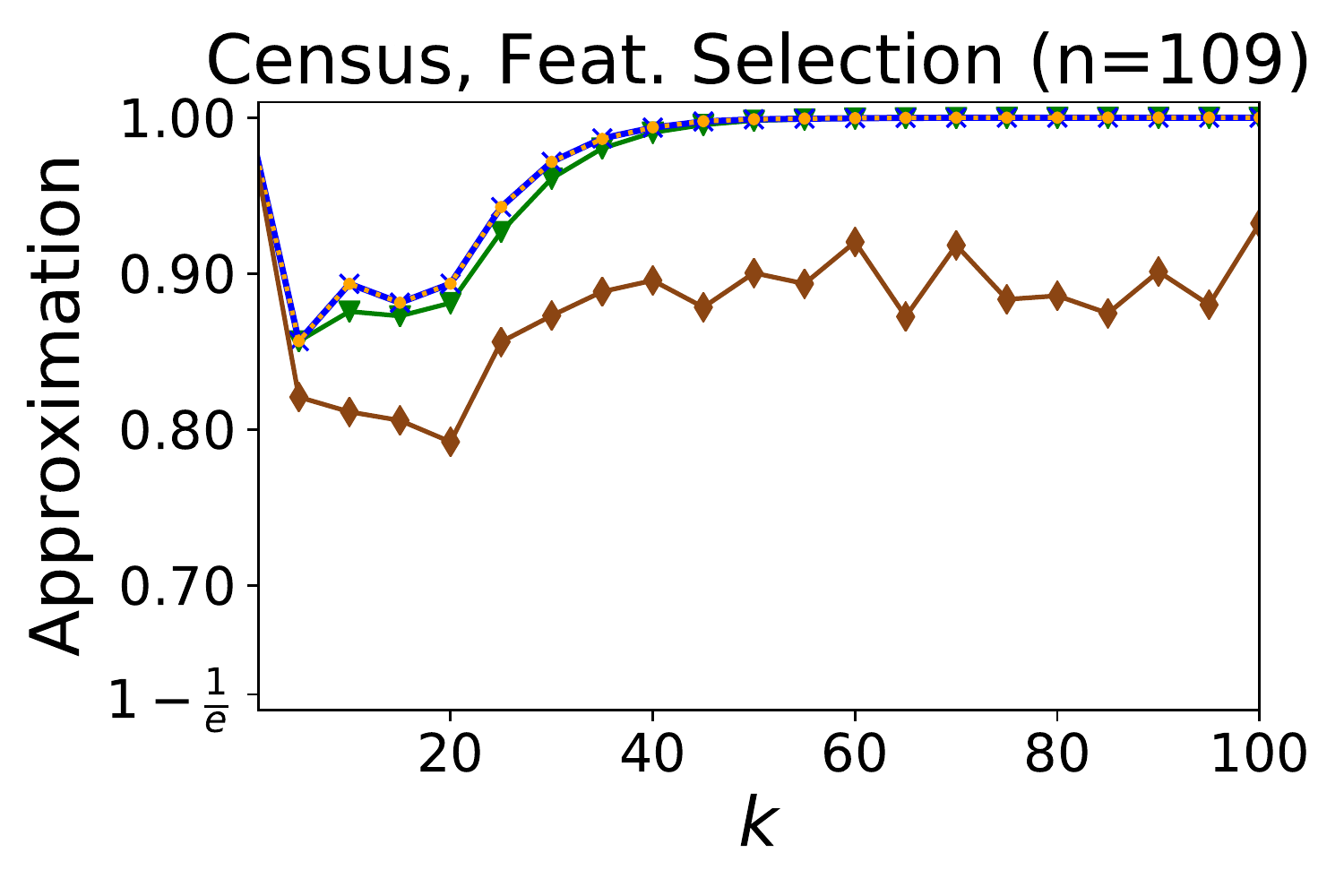}
		\includegraphics[width=0.24\textwidth]{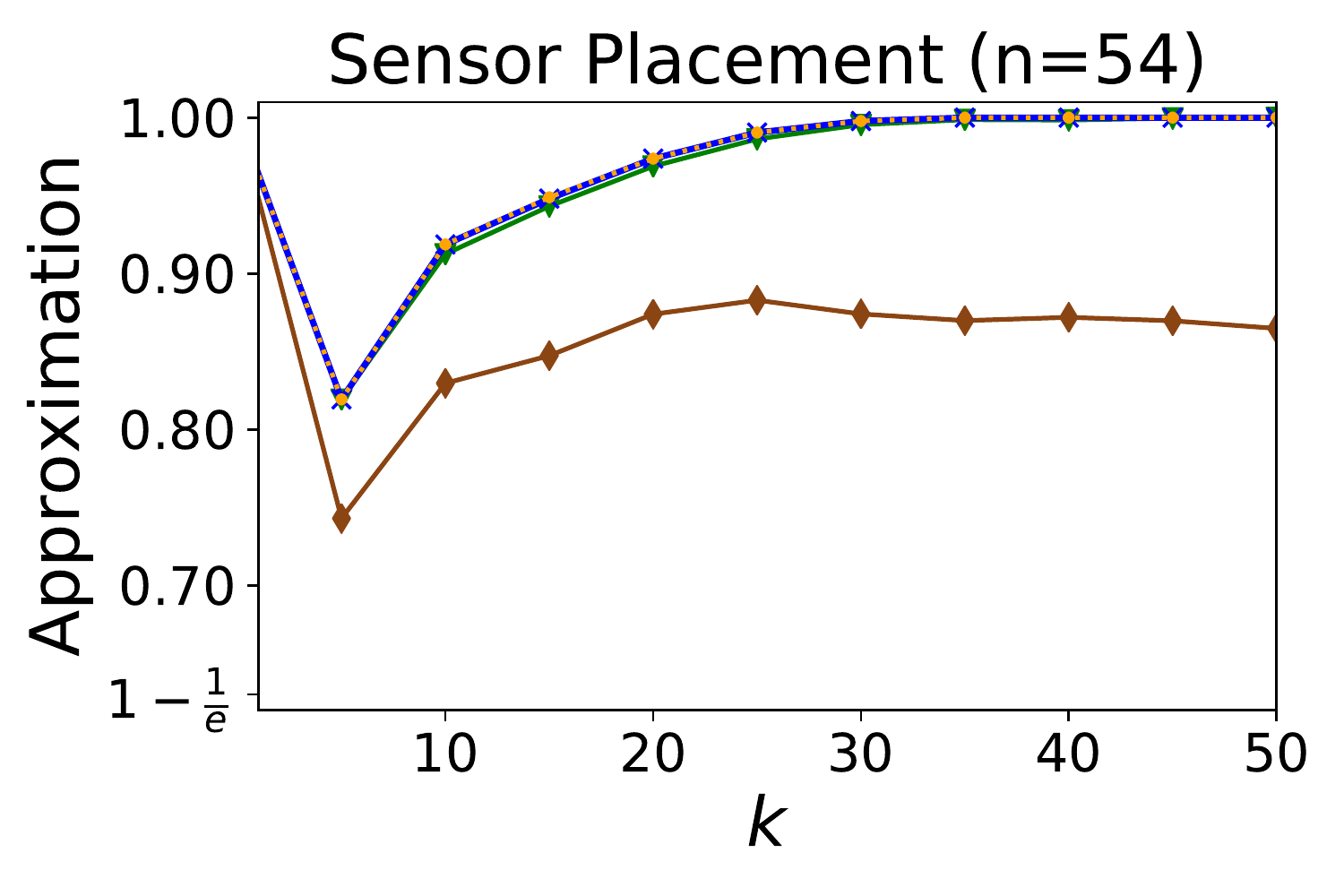}
	\end{subfigure} %
	\caption{Approximations computed by \mainAlg \ for the performance of four different submodular maximization algorithms.}
	\label{fig:algs}
\end{figure*}

\paragraph{Results.} In Figure \ref{fig:algs}, we see that \mainAlg \ computes bounds on the approximations achieved by all four algorithms that are significantly better than $1-1/e$. For \textsc{Greedy} and \textsc{Local search}, \mainAlg \ derives nearly identical approximations that are almost always over $0.85$. In many instances, such as movie recommendation, facility location, feature selection, and sensor placement, approximations are over $0.95$. The approximations given by \mainAlg \ for \textsc{Lazier-than-lazy greedy} are either identical to \textsc{Greedy} and \textsc{Local search}, or $0.02$ to $0.05$ worse. Even though \textsc{Random greedy} and \textsc{Greedy} have the same theoretical guarantee of $1-1/e$ for monotone submodular functions, the gap in their approximations on these instances is significant.

In most cases, the approximations obtained by \mainAlg \ follow a ``Nike-swoosh" shape as a function of constraint $k$. For $k=1$, the algorithms are either exactly or near-optimal; the lowest approximations are obtained for small values of $k$, and then approximations rebound and slowly increase as $k$ increases. For experiments with $n > 3000$ and the Facebook revenue maximization setting, the values of $k$ (up to $100$) are too small to observe this increase.





\subsection{\mainAlg  \ vs benchmarks for \textsc{Greedy} approximations}

 \begin{figure*}[t!]
	\centering
	\begin{subfigure}[b]{0.24\textwidth}
		\includegraphics[width=\textwidth]{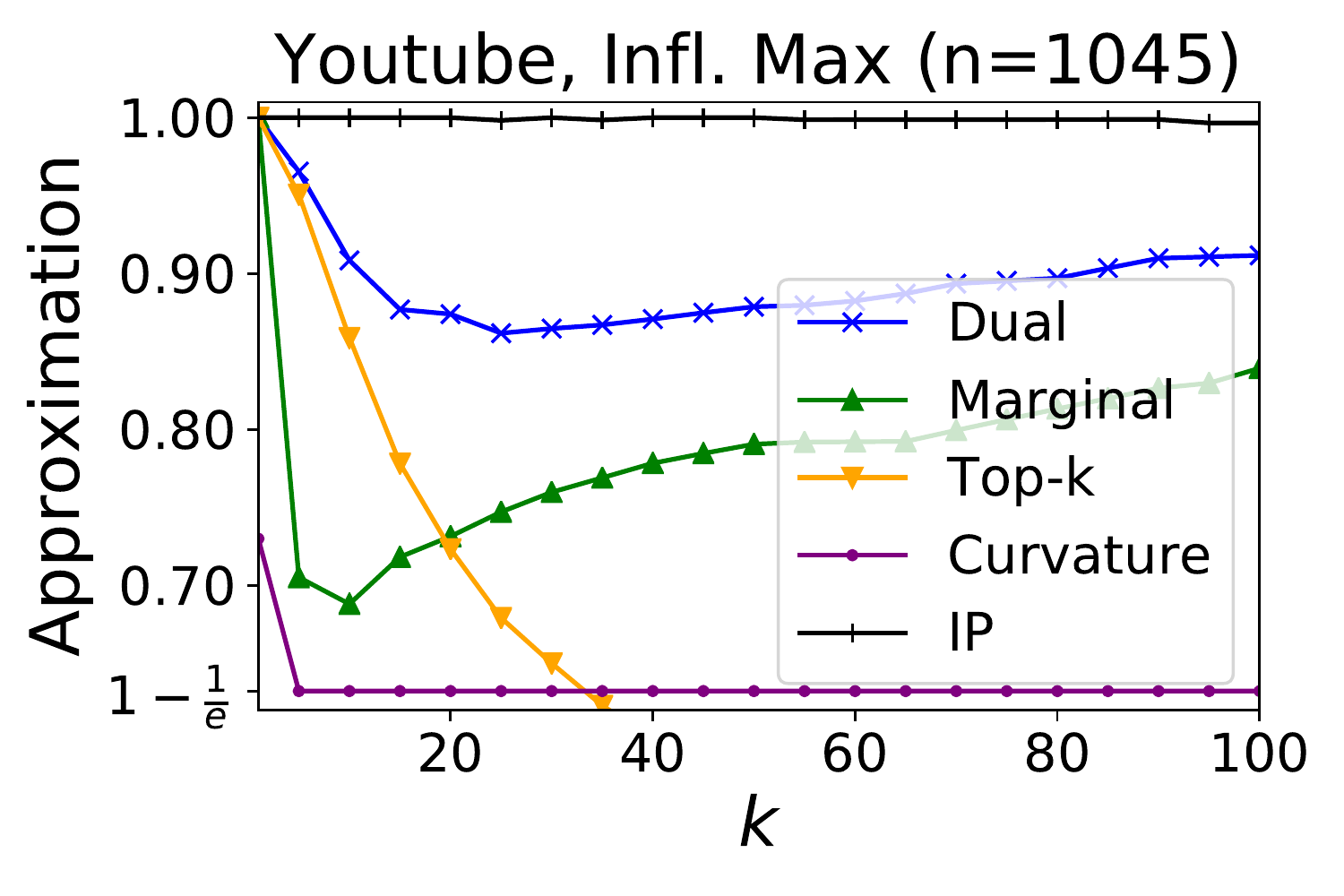}
	\end{subfigure} %
	\begin{subfigure}[b]{0.24\textwidth}    
		\includegraphics[width=\textwidth]{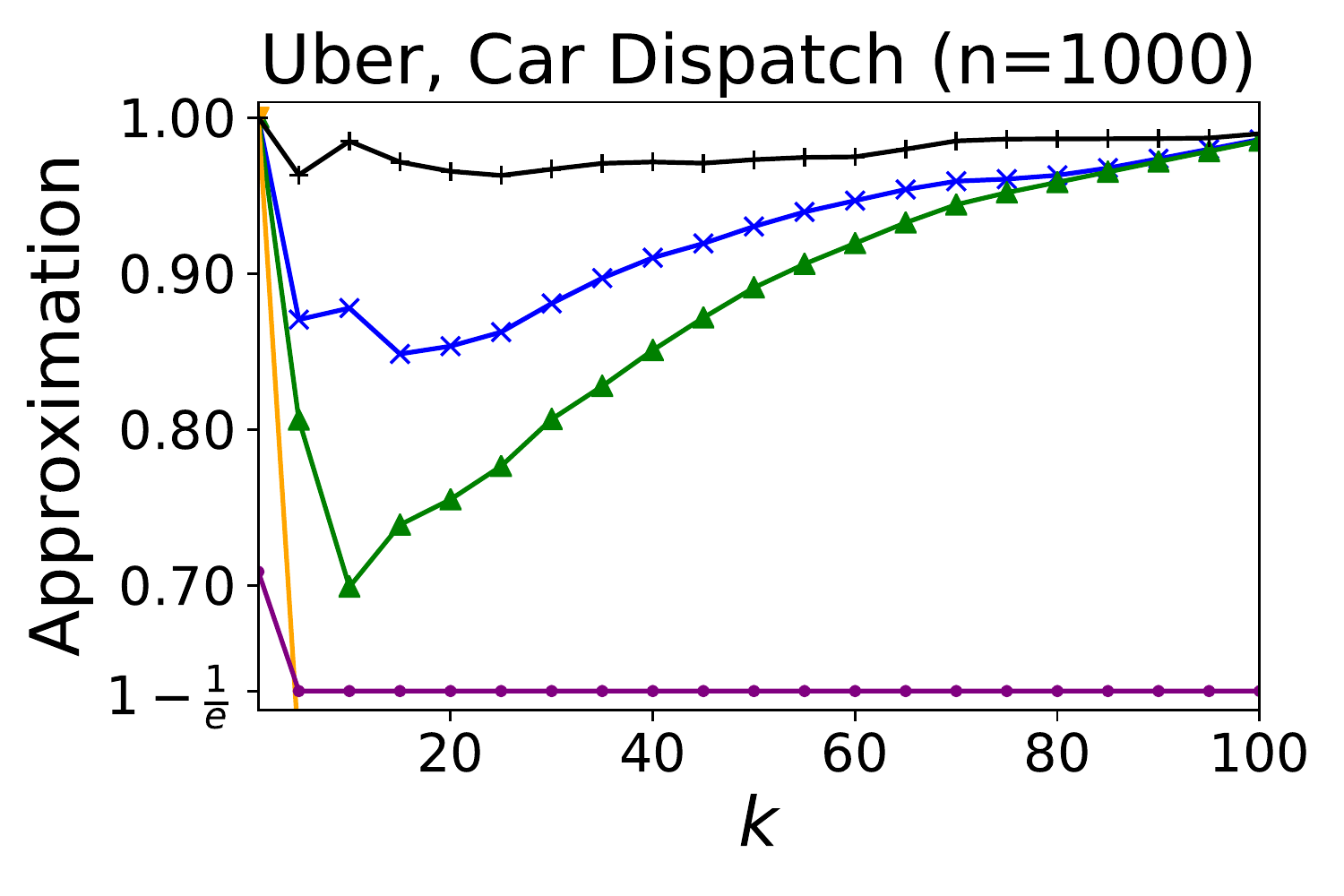}
	\end{subfigure} 
	\begin{subfigure}[b]{0.24\textwidth}    
		\includegraphics[width=\textwidth]{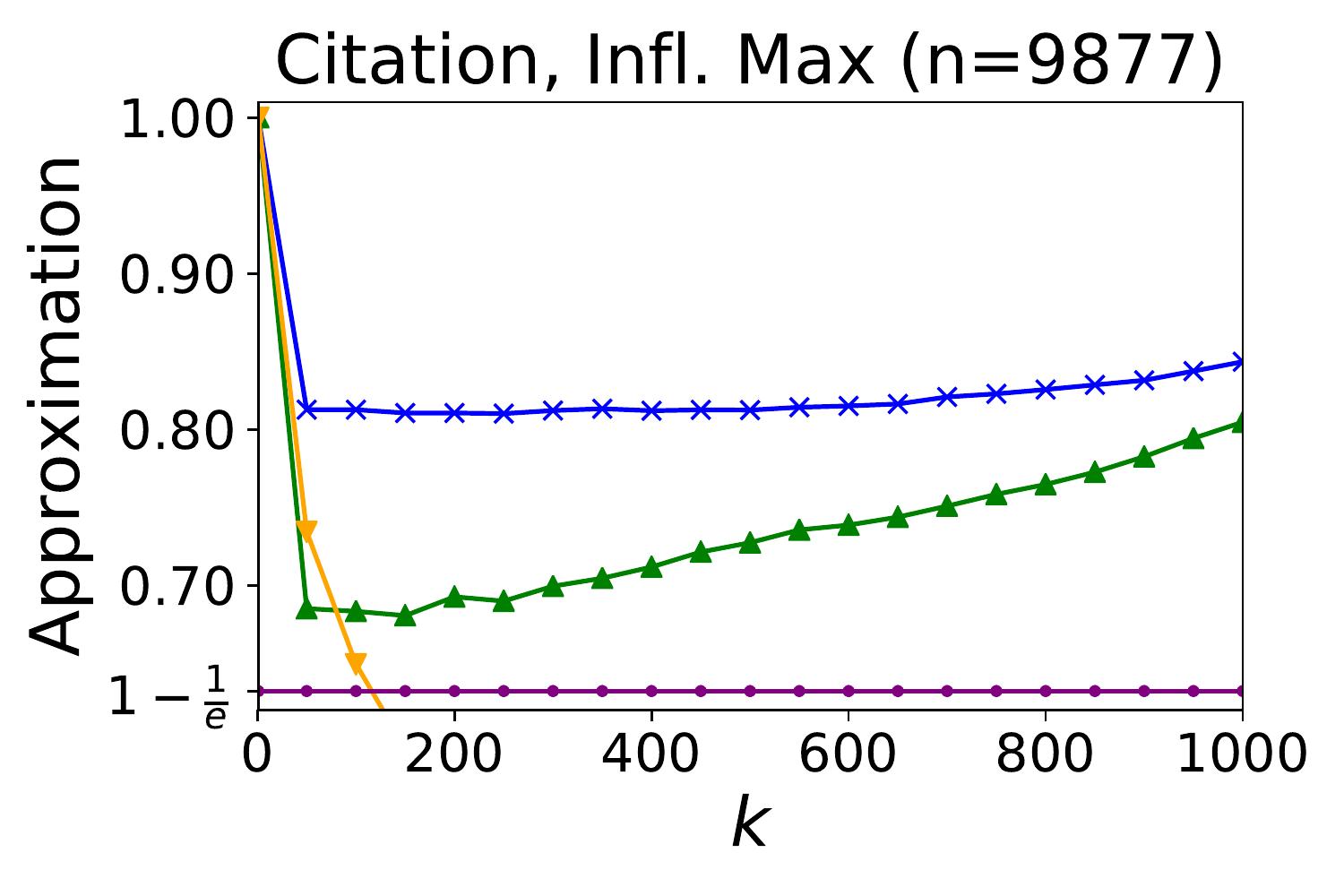}
	\end{subfigure} 
	\begin{subfigure}[b]{0.24\textwidth}    
		\includegraphics[width=\textwidth]{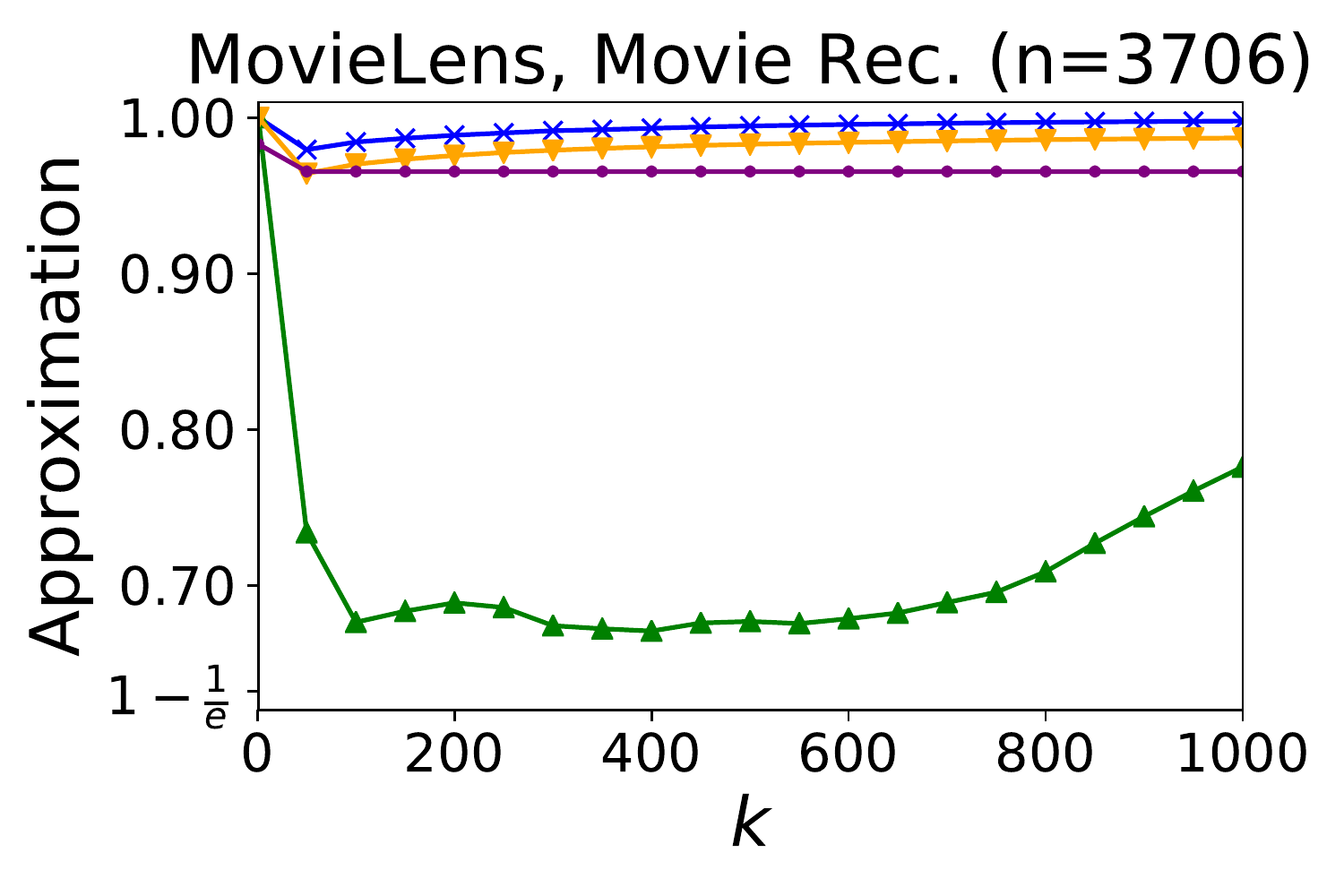}
	\end{subfigure} 
	\begin{subfigure}[b]{0.24\textwidth}
		\includegraphics[width=\textwidth]{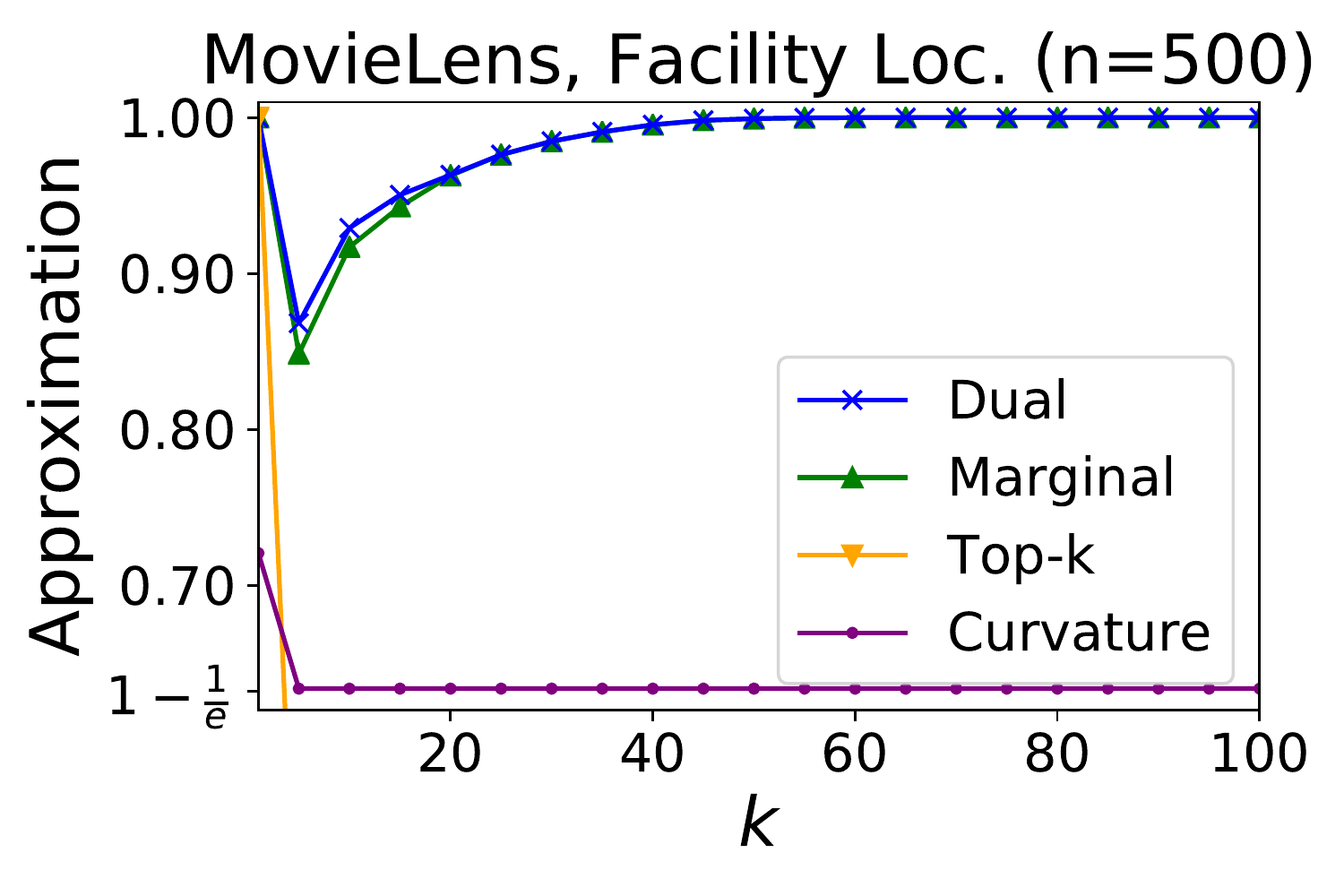}
	\end{subfigure} %
	\begin{subfigure}[b]{0.24\textwidth}    
		\includegraphics[width=\textwidth]{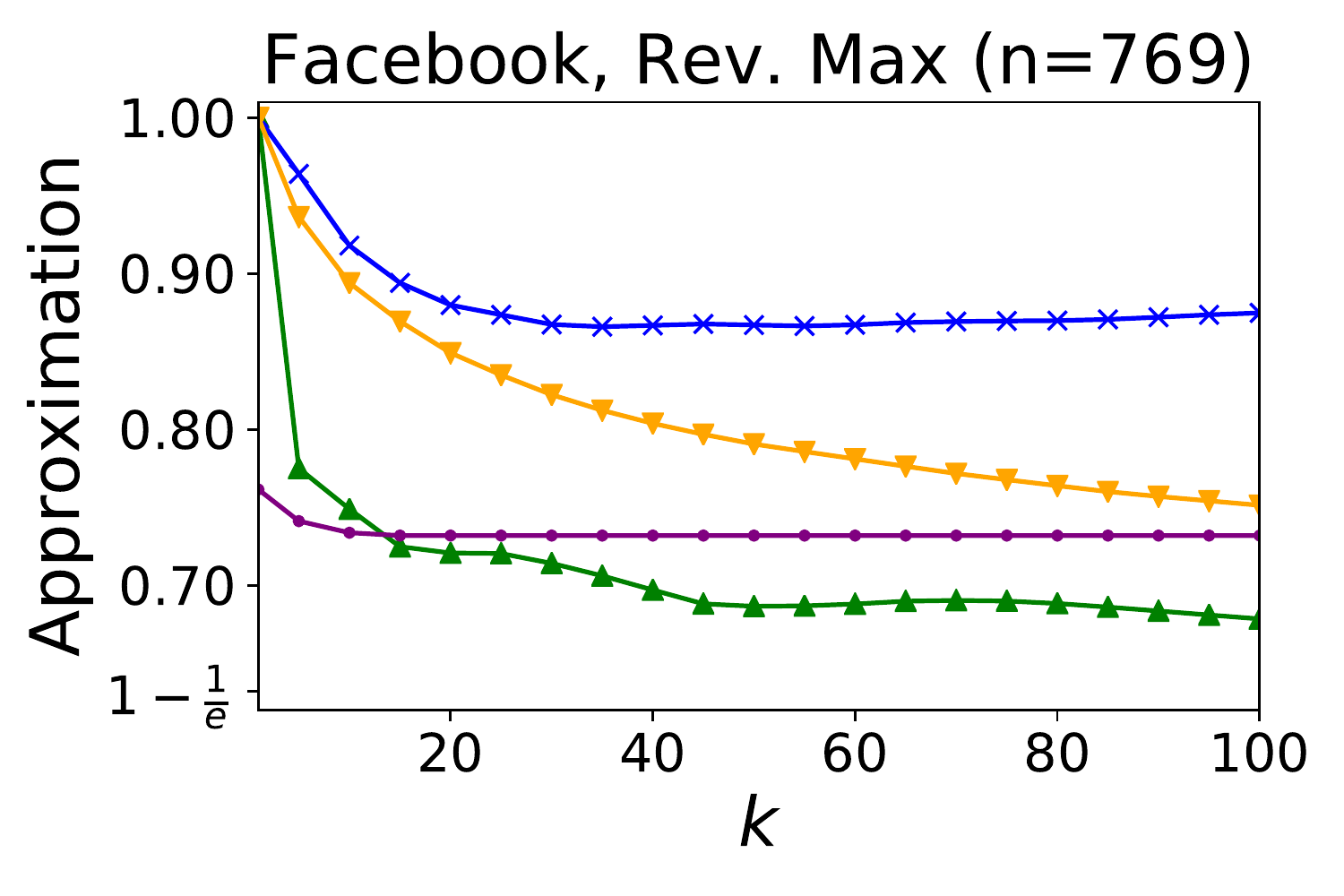} 
	\end{subfigure} 
	\begin{subfigure}[b]{0.24\textwidth}    
		\includegraphics[width=\textwidth]{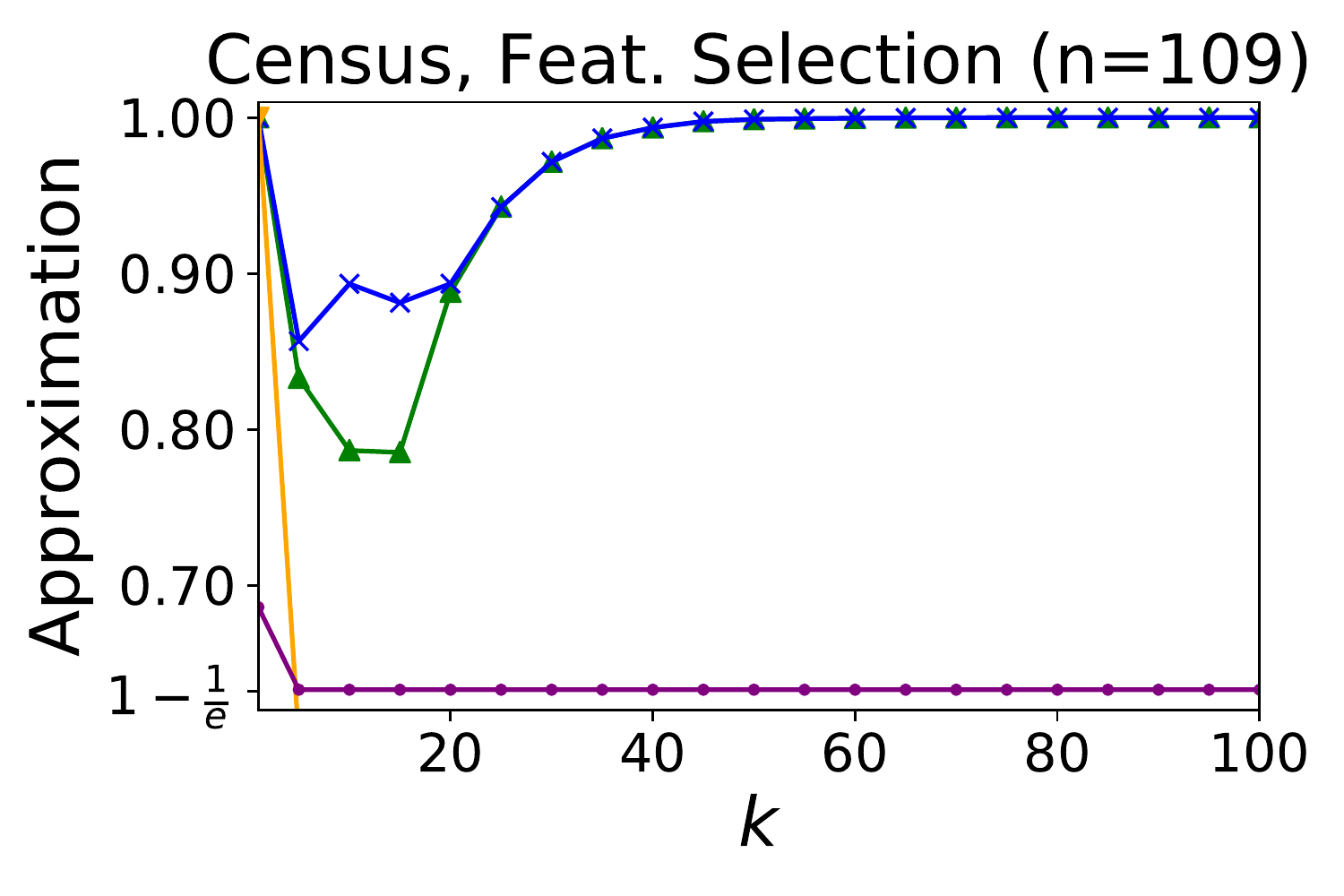}
	\end{subfigure} 
	\begin{subfigure}[b]{0.24\textwidth}    
		\includegraphics[width=\textwidth]{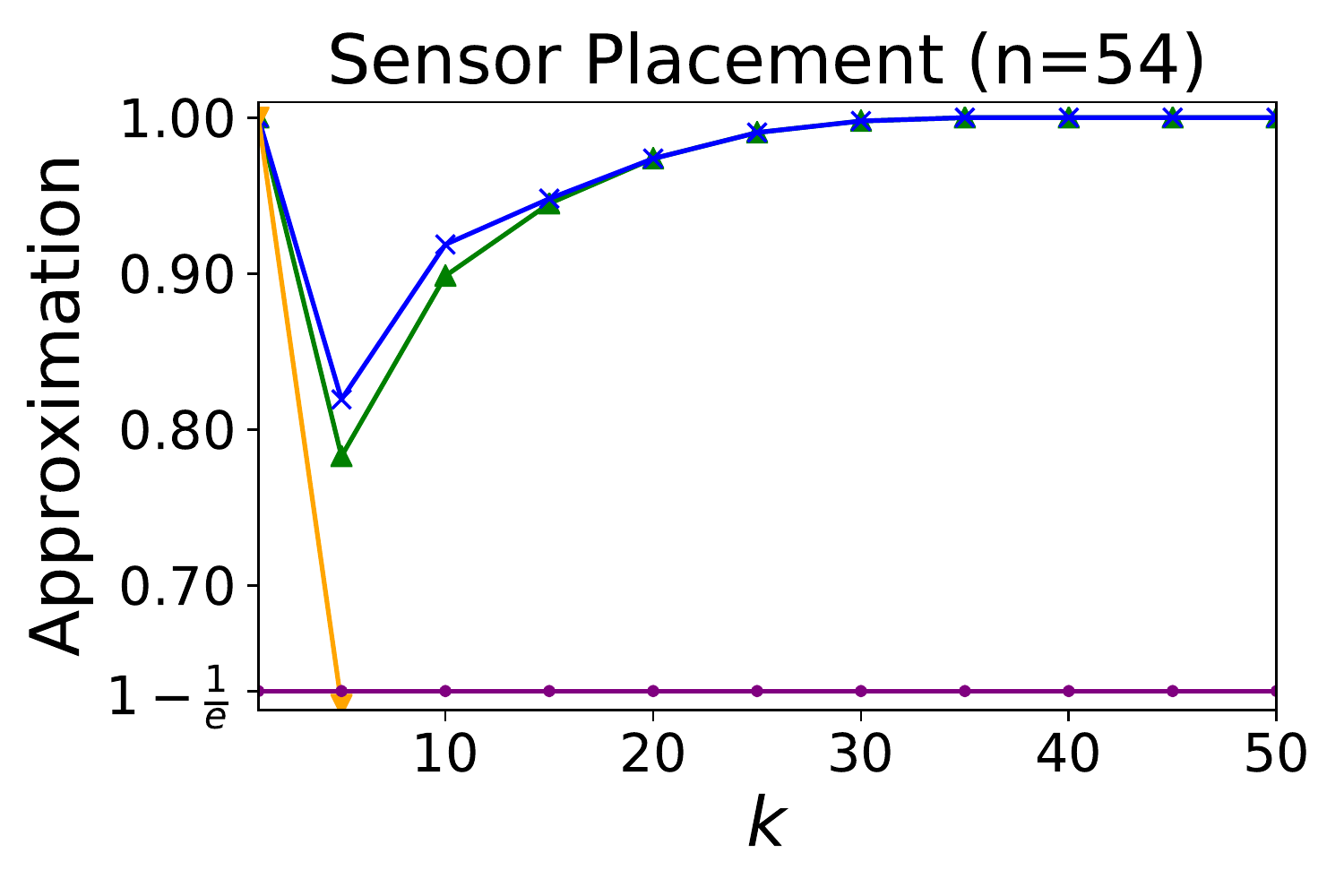} 
	\end{subfigure} 
	\caption{\textsc{Greedy} approximations computed by \mainAlg \ and multiple benchmarks on coverage objectives (top row) and submodular non-coverage objectives (bottom row).} 
	\label{fig:dual}
\end{figure*}

For the next set of experiments, we compare  \mainAlg \ to multiple benchmarks, which also compute approximations for the performance of submodular maximization algorithms. We fix a single  algorithm, \textsc{Greedy}, and compare the approximations found by \mainAlg \ and the benchmarks. 
We consider large instances, as well as small instances with $n \leq 20$ elements, where we can compute the curvature and sharpness benchmarks that require brute-force computation.

\paragraph{Benchmarks.} We consider the following benchmarks. 
\begin{itemize}
\item{{\bf Top-k.}} For a simple baseline, we upper bound \texttt{OPT} using the $k$ elements, $A$, with maximum singleton value $f(a)$: $\ubOPT_k =  \sum_{a\in A} f(a)$.

\item{{\bf Marginal.}} By using the value of \textsc{Greedy} solutions $S_i$ of size $i$ as well as  \textsc{Greedy} analysis techniques, we derive  the following more sophisticated bound:
 $$\OPT_k \leq  \frac{f(S_j) - (1-1/k)^{j-i}f(S_i)}{1-(1-1/k)^{j-i}}$$ for all $i < j$ (See Appendix \ref{app:bench1} for proof).
 We compute the minimum upper bound  over all $i < j \leq n$ pairs.
  
\item{{\bf Curvature.}} The curvature $c = 1 - \min_{S, a}\frac{f_S(a)}{f(a)}$ of a function  measures  how close $f$ is to additive  \citep{CC84}. It yields an improved $(1-e^{-c})/c$ approximation for \textsc{Greedy}.
 
\item {{\bf Sharpness.} The property of submodular sharpness was introduced by \citet*{PST19} as an explanation for the performance of \textsc{Greedy} in practice. It is the  analog of sharpness from continuous optimization \cite{L63} and assumes that any solution which differs significantly from the optimal solution has a substantially lower value.
On small instances, we compute the Dynamic Submodular Sharpness property of the function, which is the sharpness property that yields the best approximation. More details in Appendix \ref{app:bench_sharp}.}

\item{{\bf Integer Program (IP).} For the special case of coverage objectives, the problem can be formulated as an integer linear program. Integer programming is NP-complete and an optimal solution is not guaranteed to be found in polynomial time. By using IP on coverage functions with $n \leq 2000$ elements, we can find an optimal solution, and thus find the tight approximation achieved by \textsc{Greedy} solutions in these cases.}
\end{itemize}

\begin{figure*}
	\begin{minipage}[c]{0.49\linewidth}
		\begin{subfigure}[b]{0.48\textwidth}    
			\includegraphics[width=0.98\textwidth]{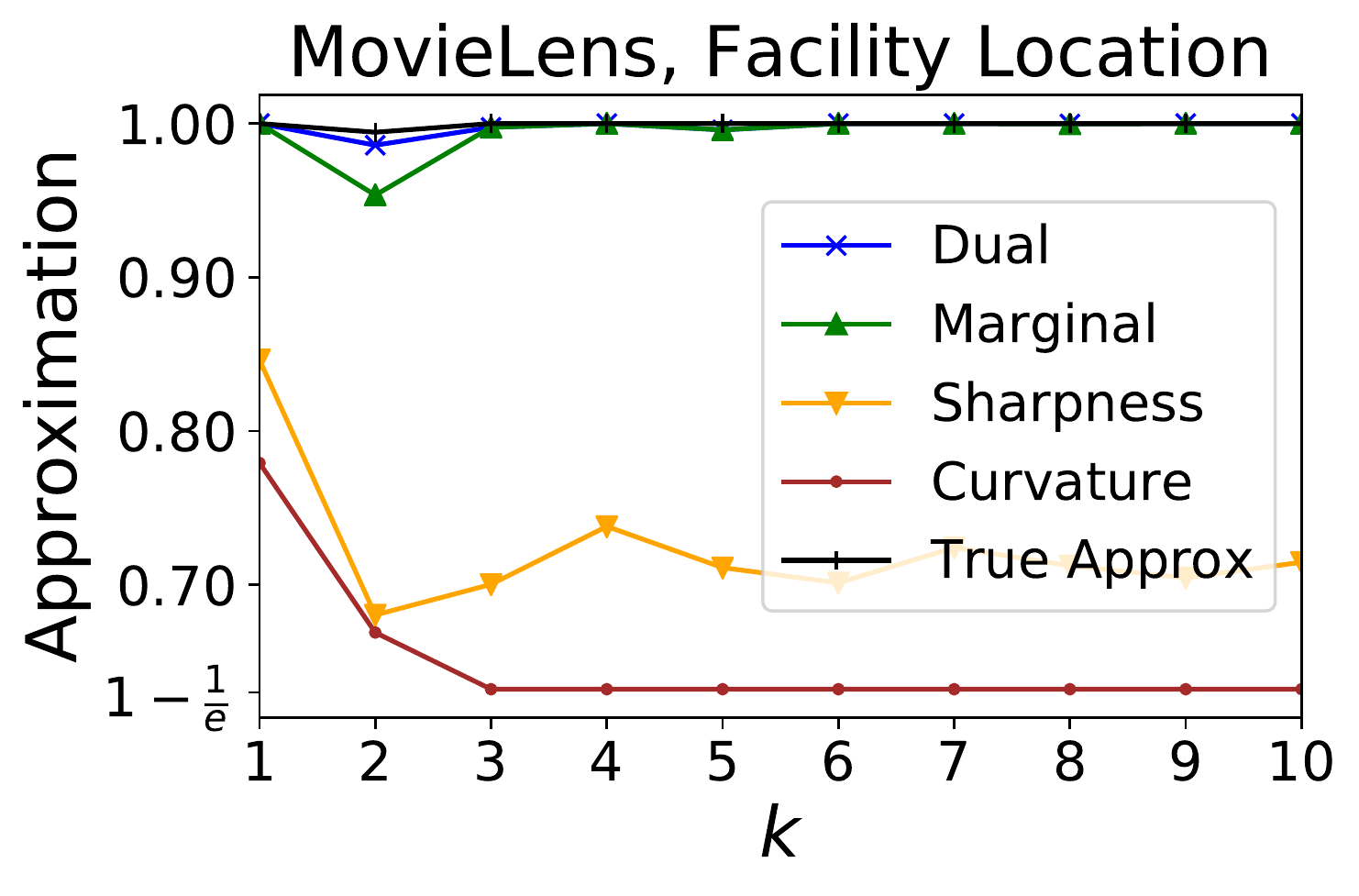} 
		\end{subfigure} 
		\begin{subfigure}[b]{0.48\textwidth}    
			\includegraphics[width=0.98\textwidth]{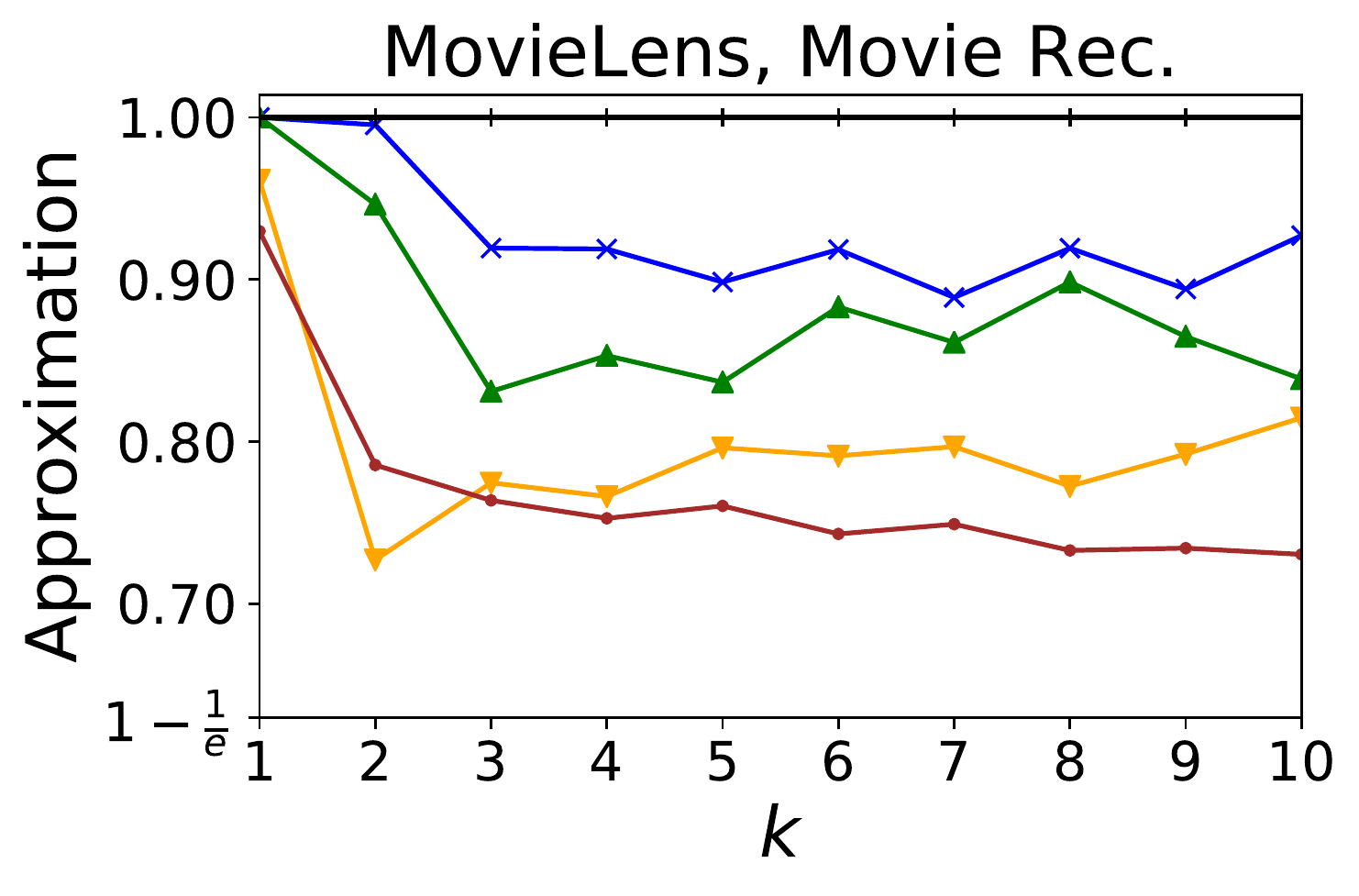}
		\end{subfigure} 
		\caption{\textsc{Greedy} approximations computed by \mainAlg \ and multiple benchmarks on small instances ($n\leq20$).}\label{fig:small}
	\end{minipage}%
	\hfill
	\begin{minipage}[c]{0.49\linewidth}
		\begin{subfigure}[b]{0.48\textwidth}
			\includegraphics[width=0.98\textwidth]{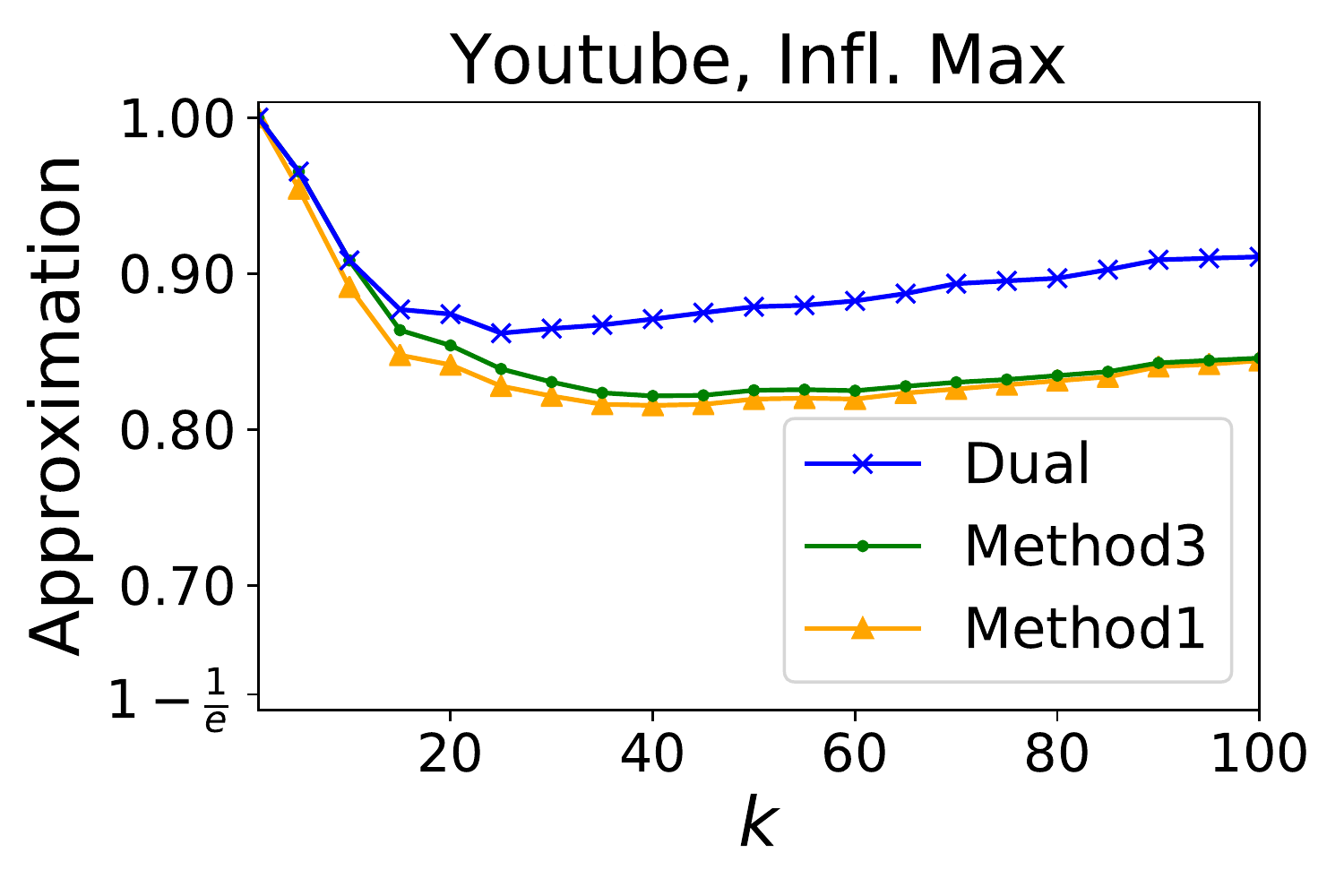}
		\end{subfigure} %
		\begin{subfigure}[b]{0.48\textwidth}    
			\includegraphics[width=0.98\textwidth]{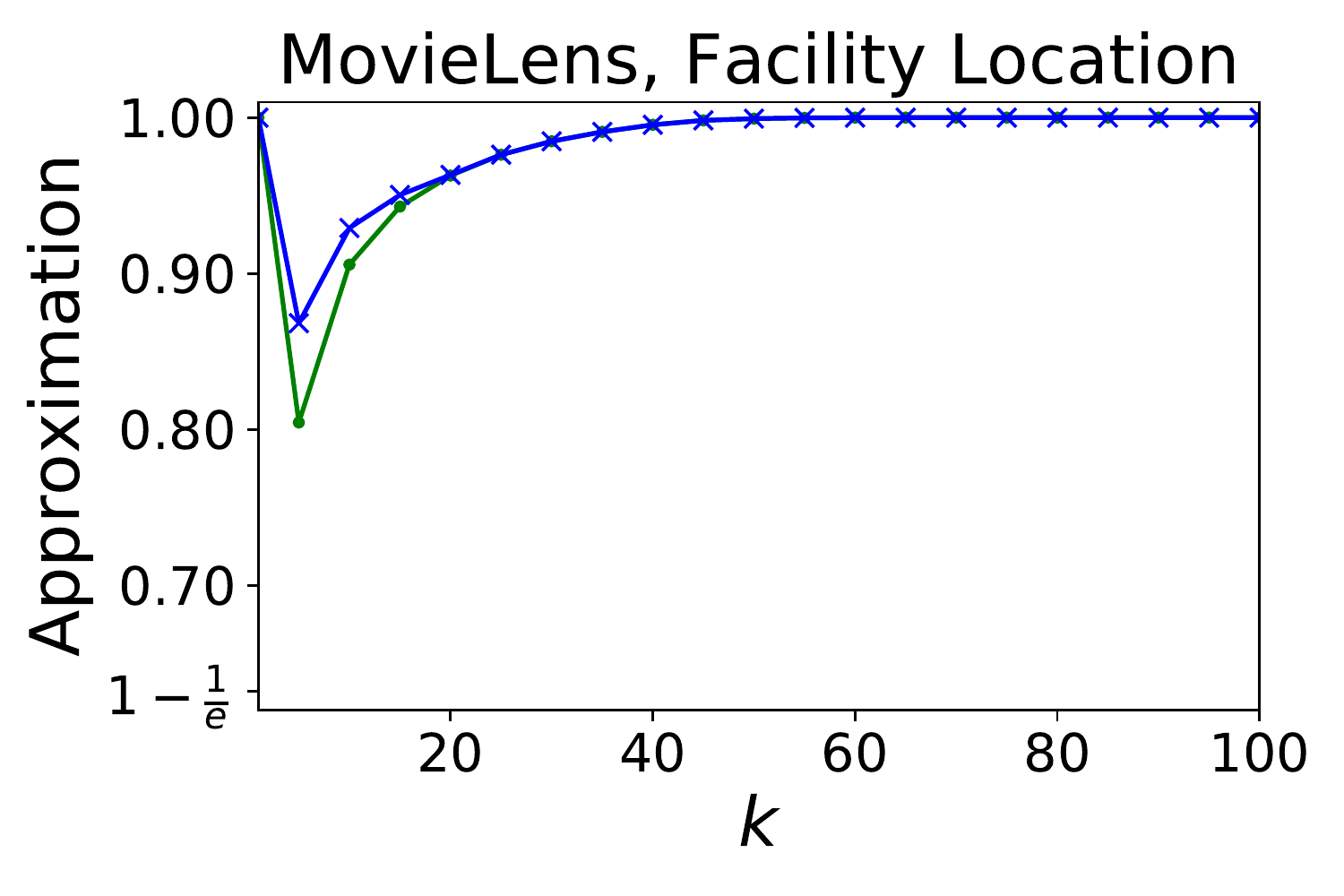}
		\end{subfigure} 
		\caption{\textsc{Greedy} approximations computed by Method~\ref{alg:coverage}, Method~\ref{alg:sm}, and \mainAlg \ on a coverage (left) and submodular objective (right).}\label{fig:alg}
	\end{minipage}
\end{figure*}

Unlike \mainAlg, \textsc{Top-k}, and \textsc{Marginal}, which upper bound $\OPT$,  \textsc{Curvature} and \textsc{Sharpness} each identify properties that guarantee a bound on the  \textsc{Greedy} approximation for any function that satisfies the properties. 
However, computing the parameters of these properties requires brute force computation and is computationally infeasible on large datasets.
Another benchmark is stability, which guarantees that \textsc{Greedy} finds the optimal solution if the instance is sufficiently stable, i.e., its  optimal solution remains optimal under small perturbations of the function \cite{CRV17}. However, our settings are not perturbation-stable because there are multiple near-optimal solutions. 
\subsubsection{Results on large instances}
We compare the approximations of \textsc{Greedy} found by \textsc{Dual} to those found by \textsc{Topk}, \textsc{Marginal} and \textsc{Curvature}. For \textsc{Curvature}, we compute an upper bound on the curvature parameter $c$ and the approximation $(1-e^{-c})/c$ (See Appendix \ref{app:bench_curv} for details). For coverage objectives, we additionally compute \textsc{IP} on datasets where $n \leq 2000$.

Figure \ref{fig:dual} shows that \textsc{Dual} consistently outperforms or matches the baselines. The exception is \textsc{IP}, which finds the tight approximation achieved by \textsc{Greedy} in two settings. 
The \textsc{Top-k}  and \textsc{Curvature} benchmarks perform poorly in most cases, which implies that most objectives are not close to additive. 
\textsc{Marginal} is the strongest general benchmark, but it is still significantly outperformed by \textsc{Dual} on most instances. For MovieLens movie recommendation and Facebook revenue maximization, where objectives are close to additive, \textsc{Top-k} and \textsc{Curvature} both outperform \textsc{Marginal}. For Youtube and Uber, \textsc{IP} shows that the tight approximation achieved by \textsc{Greedy} is above $0.99$, and between $0.95$ and $0.98$, respectively. Thus, even though \textsc{Dual} outperforms the other benchmarks, the results on Youtube and Uber settings indicate there remains a gap between the approximation computed by \textsc{Dual} and the tight approximation achieved by \textsc{Greedy}.

\subsubsection{Results on small instances} For small instances where we can exactly compute the sharpness and curvature parameters as well as $\OPT$ by brute-force, we follow the experimental setup from \cite{PST19}. For $k\in [1,10]$, we randomly choose $n=2k$ elements to comprise the ground set and analyze the result of \mainAlg \ versus benchmarks on objectives, facility location, and movie recommendation, from \cite{PST19} on the MovieLens dataset. More details in Appendix \ref{app:small}. 


In Figure \ref{fig:small}, we observe that \textsc{Dual}  yields the best approximations. For facility location, \textsc{Dual} and \textsc{Marginal}  show a near-optimal approximation while other benchmark approximations are near $1-1/e$ for larger $k$. For the movie recommendation objective, the gap between the different benchmarks is smaller. In both settings and for all $k$, \textsc{Greedy}  finds a near-optimal solution. 

\begin{table}[]
\centering
\resizebox{0.45\columnwidth}{!}{%
\begin{tabular}{l|lllll}
$k$  & \textsc{Marg.} & \textsc{Dual}   & \textsc{Opt} & \textsc{Curv.} & \textsc{Sharp.} \\ \hline
6  & 9.60 e-5 & 7.55 e-3 & 0.0212  & 0.176     & 0.448      \\
8  & 1.59 e-4 & 0.0129 & 0.249   & 3.55      & 9.00      \\
10 & 3.17 e-4 & 0.0246  & 3.43    & 74.7       &  187      
\end{tabular}
}
\caption{Average runtimes (in seconds) of benchmark methods on MovieLens facility location setting, where $n = 2k$.}
\label{tab:1}

\end{table}

We report benchmark runtimes in Table \ref{tab:1} for the facility location objective and find that \textsc{Curvature}, \textsc{Sharpness} and \texttt{OPT}, which all require brute-force computation, become exponentially slower as $k$ increases. At $k=10, n=20$, the average time to compute curvature approximation is 75 seconds while sharpness computation time is $187$ seconds. These methods are much slower than even brute-force computing \texttt{OPT} which takes $3.5$ seconds. While these benchmarks are not scalable, \textsc{Dual}, which is at least  $1000$ times faster than these two methods for the facility location objective when $k=10$, is scalable for larger datasets. 

\subsubsection{Comparison of proposed methods} We compare Method~\ref{alg:coverage}, Method~\ref{alg:sm}, and \textsc{Dual} on a coverage objective  and compare Method~\ref{alg:sm} and \textsc{Dual} on a non-coverage objective  in Figure \ref{fig:alg}. We observe that even Method~\ref{alg:coverage}, which employs the additive lower bound on the dual objective, finds approximations that are above $0.8$. This indicates that, unlike the primal objective, the dual objective is close to additive.
 By partitioning  the dual space (Method \ref{alg:sm}), a small improvement over  Method~\ref{alg:coverage} is achieved. Finally, by considering the upper bound on the optimal solution  for the functions $f_S$ for $S \in \S$ (\mainAlg), the approximations further improve. This improvement is minor for MovieLens, but can be around $0.1$ for some $k$ on YouTube. 

\newpage
\bibliography{main}
\bibliographystyle{icml2021}

\newpage

\appendix
\section{Missing Analysis for Submodular Functions}
\subsection{Implementation of Method \ref{alg:sm}}\label{app:pseudo}
In general, we can find value $v_j$ at iteration $j$ by iterating through  elements indexed by $i \in \{i_{j-1}+1, i_{j-1}+2, \ldots \}$ until $i^\star$, where $i^{\star}$ is the minimum index such that $f(A_{i^{\star}}) - \sum_{\ell = 1}^{j-1} v_\ell \geq f(a_{i^{\star}})$. In the case where $f(A_{i^{\star} -1}) - \sum_{\ell = 1}^{j-1} v_\ell < f(a_{i^{\star}})$. In this case, $v_j = f(A_{i^{\star} -1}) - \sum_{\ell = 1}^{j-1} v_\ell$. This requires one pass through the elements in $N$. Pseudocode to find $v_j$ is below.
\begin{algorithm}[H]
	\caption{Method to find $v_j$}
	\label{alg:pseudo}
	\begin{algorithmic}
		\INPUT   function $f$,  cardinality constraint $k$
		\STATE For $j = 1$ to $k$:
		\STATE \qquad$i^{\star} \leftarrow \min\{i:  f(A_i) - \sum_{\ell=1}^{j-1} v_\ell \geq f(a_i)\}$
		\STATE \qquad if $f(A_{i^{\star}-1}) - \sum_{\ell=1}^{j-1} v_\ell \geq f(a_{i^\star})$:
		\STATE \qquad\qquad $i^\star \leftarrow i^\star - 1$
		\STATE \qquad\qquad $v_j \leftarrow f(A_{i^{\star}}) - \sum_{\ell=1}^{j-1} v_\ell$
		\STATE \qquad else:
		\STATE \qquad\qquad $v_j \leftarrow f(a_{i^{\star}})$
		\STATE \textbf{return}  $\sum_{j=1}^k v_j$
	\end{algorithmic}
\end{algorithm}
\begin{proposition}
\label{prop:pseudo}
For a monotone submodular function $f$ and cardinality constraint $k$, value $v_j$ in Method \ref{alg:sm} can be computed using Method \ref{alg:pseudo}.
\end{proposition}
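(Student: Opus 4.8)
The plan is to fix an iteration $j$, write $r := \sum_{\ell=1}^{j-1} v_\ell$ for the value already accumulated, and compare the quantity $v_j$ defined in Method~\ref{alg:sm} with the quantity $\tilde v_j$ produced by Method~\ref{alg:pseudo}. By definition, $v_j = \max\{v : f(a_{i_j(v)}) \ge v\}$ where $i_j(v) = \min\{i : f(A_i) - r \ge v\}$. Writing $h(i) := f(A_i) - r$, the two monotonicity facts that drive everything are that $h$ is non-decreasing in $i$ (since $f$ is monotone) and that $f(a_i)$ is non-increasing in $i$ (since the ground set is ordered by decreasing singleton value). Consequently the threshold index $i^\star = \min\{i : h(i) \ge f(a_i)\}$ used by Method~\ref{alg:pseudo} is well defined, and once $h(i) \ge f(a_i)$ holds it continues to hold for all larger indices; equivalently, $h(i) < f(a_i)$ for every $i < i^\star$ and $h(i) \ge f(a_i)$ for every $i \ge i^\star$.

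I would then prove $\tilde v_j = v_j$ by the standard two-sided argument: (a) the value $\tilde v_j$ is feasible, i.e. it satisfies $f(a_{i_j(\tilde v_j)}) \ge \tilde v_j$, and (b) no $v > \tilde v_j$ is feasible. The case split is exactly the if/else of Method~\ref{alg:pseudo}. In the \emph{else} branch, where $h(i^\star-1) < f(a_{i^\star})$, we have $\tilde v_j = f(a_{i^\star})$; here $i_j(f(a_{i^\star})) = i^\star$ because $h(i^\star) \ge f(a_{i^\star}) > h(i^\star-1)$, so feasibility reduces to $f(a_{i^\star}) \ge f(a_{i^\star})$. In the \emph{if} branch, where $h(i^\star-1) \ge f(a_{i^\star})$, we have $\tilde v_j = f(A_{i^\star-1}) - r = h(i^\star-1)$; feasibility follows because the induced index $i' = i_j(h(i^\star-1))$ satisfies $i' \le i^\star-1$, so that $h(i') = h(i^\star-1)$ and $f(a_{i'}) > h(i') = h(i^\star-1)$ using the strict inequality available for indices below $i^\star$.

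For the upper bound (b) I would argue by contradiction in each branch. Suppose some $v > \tilde v_j$ is feasible and let $i := i_j(v)$, so $h(i) \ge v$ and $f(a_i) \ge v$. In the \emph{else} branch, $f(a_i) \ge v > f(a_{i^\star})$ forces $i < i^\star$ by the ordering, but then $h(i) \le h(i^\star-1) < f(a_{i^\star}) < v$, contradicting $h(i) \ge v$. In the \emph{if} branch, $h(i) \ge v > h(i^\star-1)$ forces $i \ge i^\star$, but then $f(a_i) \le f(a_{i^\star}) \le h(i^\star-1) < v$, again a contradiction. Combining (a) and (b) in both branches shows that the pseudocode returns exactly the maximizer defining $v_j$.

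The main obstacle I anticipate is the bookkeeping around the ``$\min$ index'' definition of $i_j(v)$ and the half-open nature of the level sets of $h$: I must verify that the maximum defining $v_j$ is actually attained and that feasibility holds precisely at the boundary values $f(a_{i^\star})$ and $f(A_{i^\star-1}) - r$, being careful about degenerate cases such as zero-marginal elements that create ties in $h$ (where $i_j(v)$ may drop strictly below $i^\star-1$). These ties do not change the argument because, as noted, the strict inequality $h(i) < f(a_i)$ holds for every index below $i^\star$, so the induced index remains feasible; I would state this explicitly to close the gap.
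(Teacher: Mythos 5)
Your proof is correct and follows essentially the same route as the paper's: the same index ordering, the same two monotonicity facts ($f(A_i)-r$ non-decreasing in $i$, $f(a_i)$ non-increasing in $i$), and the same case split matching the if/else branch of Method~\ref{alg:pseudo}. If anything, your two-sided feasibility-plus-no-larger-feasible-value verification is more careful than the paper's argument, which simply asserts that at the maximizer one of the two constraints must be tight and then identifies the best index in each case.
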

\begin{proof}
To maximize value $v_j$ in each iteration $j$, we note that for some $i \in [n]$ either $v_j = f(a_{i})$  or $v_j = f(A_{i}) - \sum_{\ell=1}^{j-1} v_\ell$, so that at least one constraint is tight.

In the case where $v_j = f(a_i)$ for some $i$, we can find $v_j$ by iterating through  elements indexed by $i \in \{i_{j-1}+1, i_{j-1}+2, \ldots \}$ until $i^\star$, where $i^{\star}$ is the minimum index such that $f(A_{i^{\star}}) - \sum_{\ell = 1}^{j-1} v_\ell \geq f(a_{i^{\star}})$ and $v_j = f(a_{i^{\star}})$. Note that this satisfies the constraints of Method 3 and the minimal index corresponds to the maximal possible value of $v_j$.

In the case where $v_j = f(A_{i}) - \sum_{\ell=1}^{j-1} v_\ell$ for some $i$, we note that the value $v_j$ can only be further increased by decrementing $i^\star$ found previously. Thus, the $i$ that gives the maximal value of $f(A_{i}) - \sum_{\ell=1}^{j-1} v_\ell$ for $i \in \{i_{j-1}+1, i_{j-1}+2,\ldots,i^\star -1 \}$ is $i^\star -1$ because is $f(A_i)$ is monotonic in $i$ and $v_j = f(A_{i^\star - 1}) - \sum_{\ell=1}^{j-1} v_\ell$. This satisfies the constraints of Method 3, because $f(a_{i^\star -1}) \geq f(A_{i^\star - 1}) - \sum_{\ell=1}^{j-1} v_\ell$. If this inequality did not hold, then this contradicts the fact that $i^\star$ is the minimal index such that $f(A_{i^{\star}}) - \sum_{\ell = 1}^{j-1} v_\ell \geq f(a_{i^{\star}})$.  We note that this case only occurs when $f(A_{i^\star-1}) - \sum_{\ell = 1}^{j-1} v_\ell \geq f(a_{i^{\star}})$.
\end{proof}

We established in Proposition \ref{prop:pseudo} that Method \ref{alg:sm} passes through the elements in the ground set once. Since this procedure is preceded by a sorting step, this shows that Method \ref{alg:pseudo} has a runtime of $O(n\log n)$.

\subsection{Example of bad instance for Method \ref{alg:sm}}\label{app:bad}
We construct a case where there are two types of elements in the ground set: elements $\mathcal B = \{b_i\}_{i=1}^n$ that have high singleton value and large overlap with other elements in $\mathcal B$ and elements $\mathcal G = \{g_i\}_{i=1}^n$ that have much lower singleton value and smaller overlap with elements in $\mathcal B$. We show that elements in $\mathcal B$ cause Method \ref{alg:sm} to give a poor upper bound of \texttt{OPT} and that the \textsc{Greedy} approximation can be arbitrarily bad. 

For all $i \in [n]$, let $f(b_i) = c$. For any $B \subseteq \mathcal B$, where $b_j \notin B$, $f_{B} (b_j) = 1$, i.e. $b_j$ has high overlap with others elements in $\mathcal B$ and thus, low marginal contribution to $B$. For all $i \in [n]$ and $B \subseteq \mathcal B$, let $f(g_i) = c/2$ and $f_{B}(g_i) =2$, i.e. elements in $\mathcal G$ have lower value than elements in $\mathcal B$, but have high marginal contribution.

We can see that \textsc{Greedy} will first select element $b$ from $\mathcal B$ and the remaining elements from $\mathcal G$ to achieve the solution value of $c + 2\cdot (k-1)$. However, in the case where $n$ is large and there are infinitely many elements in $\mathcal B$, Method \ref{alg:sm} will return a value to upper bound \texttt{OPT} that is $\ubOPT = k\cdot c$. Thus, the \textsc{Greedy} approximation given by Method 3 is $\frac{c + 2\cdot (k-1)}{k\cdot c}$. For large values of $k$ and $c$, this approximation becomes arbitrarily poor.
\section{Additional Details on Experimental Setup}
\subsection{Submodular maximization algorithms}\label{app:alg}
We provide additional details on submodular maximization algorithms and their implementation below.
\begin{itemize}
	\item{{\bf \textsc{Greedy}.}} The greedy algorithm, introduced  by \citet{NWF78}, obtains the optimal $1-1/e$ approximation and is widely considered as the standard algorithm for monotone submodular maximization under a cardinality constraint.  To find a solution set of size $k$, the algorithm adds the element with the largest marginal contribution to the solution set at each iteration.
	\item{{\bf \textsc{Local search}.}} Local search obtains a stronger approximation guarantee of $1/2$ for the more general family of matroid constraints. We use the deterministic algorithm from \citet{NWF78} where the algorithm searches for a pair of elements ($a_1, a_2$), $a_1\in S$ and $a_2 \notin S$, that leads to an improved solution when swapped. In our implementation of the algorithm, we begin with a set $S$ of size $k$ that consists of the top $k$ largest singletons.
	\item{{\bf \textsc{Lazier-than-lazy greedy}.}} Lazier-than-lazy greedy, also called sample greedy, improves the running time of greedy by sampling a small set of size $\frac{n\cdot \ln(1/\epsilon)}{k}$ from the remaining elements, where $\epsilon > 0$. This algorithm has a $1-1/e - \epsilon$ approximation \cite{MBKV15}. For our experiments, we run this algorithm 5 times and average the results.
	\item{{\bf \textsc{Random greedy}.}} Random greedy obtains approximation guarantees for submodular functions that are not necessarily monotonic by introducing randomization into the element selection step. It achieves a $1/e$ approximation for non-monotone functions and a $1-1/e$ approximation for monotone functions \cite{BFN14}. For our experiments, we run this algorithm 5 times and average the results.
\end{itemize}

\subsection{Experimental settings}
We provide additional details on preprocessing and sampling of datasets and objectives below.

\subsubsection{Large instances}\label{app:large}
\begin{itemize}
\item {\bf Influence maximization}: As in \cite{MBK16, BBS18}, we use social network data from the 5,000 largest communities of the {\bf Youtube social network}, which are comprised of 39,841 nodes and 224,234 undirected edges \cite{YL15}. We randomly sample 50 communities to select $n=1,000$ nodes and we select $k$ people that are connected to the largest number of people by maximizing coverage $f(S) = |N_G(S)|$.
\item {{\bf Car dispatch}: In the Uber dispatch application, the goal is to select the $k$ best locations to deploy drivers that cover the maximum number of pickups. As in \cite{BS18, KZK18}, we analyze a dataset of 1,000 locations of {\bf Uber pickups} in Manhattan, New York in April 2014 \cite{FVE}. We assign a weight $w_i$ to each neighborhood $n_i \in N$ that is proportional to the number of trips in the neighborhood, where $N$ is the collection of all neighborhoods. The weighted coverage is defined to be equal to the sum of the weights of neighborhoods $n_i$ that are reachable from at least one pickup location in $S$, i.e., 
	$f(S) =  \sum_{n_i \in N} \mathbbm{1}_{\exists n_j \in S: d(n_i,n_j) \leq R} \cdot w_i,$ where $R = 1.5km$.}
\item {{\bf Movie recommendation}:  We consider a variant of \cite{MBK16, MBNTC17, BBS18, BBS20} and use the {\bf MovieLens 1m dataset} \cite{HK16}, which consists of 6,040 users and 3,706 movies and a total of roughly 1 million ratings, to recommend movies that have both good overall ratings and are highly rated by the most users. Each user $j \in U$ ranks at least one movie $i \in S$ with an integer value from $\{0,...,5\}$ where incomplete rankings are filled in using the standard low-rank matrix completion. We use the completed movie ratings matrix $[r_{ij}]$ of rankings from user $i$ and each movie $j$ to select the $k$ highest ranked movies among users by maximizing $$f(S) = \sum_{i \in S} \sum_{j \in U} r_{ij} + C(S),$$ where $C(S) = \sum_{j \in U} \mathbbm{1}_{\exists i \in S: r_{ij} > 4.5}$. The first additive term represents the total ratings from users $j$ on movie $i$ in set $S$ and the second coverage term is the number of users who ranked any movie in set $S$ highly, i.e. above 4.5.}

\item{{\bf Facility location}: As in \cite{LWD16, PST19}, we sample the {\bf MovieLens dataset} to select a random sample of 500 movies and consider the movie ratings for all 6,040 users.}
\item {\bf Revenue maximization}: We use the objective from \cite{BBS20} to maximize revenue. We use the {\bf CalTech Facebook Network dataset} \cite{TMP12} of 769 Facebook users $N$ and 17,000 edges, and uniformly sample weights $w_{ij} \sim \mathcal U(0,1)$ to denote the revenue value. We select a subset $S$ of users to maximize revenue using $f(S) = \sum_{i \in N} (\sum_{j \in  S} w_{ij})^\alpha$, where $\alpha = 0.9$.
\item{{\bf Feature selection}: In this setting, we use the {\bf Adult Income dataset} from UCI Repository \cite{BM98} and wish to select a subset of relevant features for income prediction in a computationally feasible way. This dataset contains information about 32,561 individuals and we would like to perform classification to predict $Y$, the income of each individual, which is 1 if it is above 50k a year and 0 otherwise. We extract 109 binary features from the data and use a joint entropy objective to select relevant features: $f(S) = H(X_S, Y) = -\sum_{x\in X_S} \sum_{y\in Y} p(x,y)\log p(x,y),$ where $H$ is the entropy function,  $X_S$ is the feature matrix indexed by $S$ and $p(x,y)$ is the joint probability of the occurrences of $x$ and $y$.}
\item {\bf Sensor placement}: As in \cite{KSG08, KMGG08, OY15}, we wish to select sensors to place in different locations around a lab. Instead of the mutual information objective which is not monotonic, we instead consider the entropy objective. We use the {\bf Berkeley Intel Lab dataset} which comprises of 54 sensors that collect temperature information in various locations around the lab. We select $k$ sensors that, in aggregate, provide accurate data readings by maximizing entropy, $f(S) = H(X_S) = - \sum_{x \in X_S} P(x) \log P(x),$ where $H$ is the entropy function and $x$ is the vector of temperature readings of a sensor.
\end{itemize}
\subsubsection{Small instances}\label{app:small}
For these settings, we follow the experimental setup from \citet*{PST19}. For each $k$, we sample a ground set of size $n=2k$ and run the methods for comparison. We average the results for each $k$ across 5 runs where each run corresponds to a different sampled dataset.
\begin{itemize}
	\item {{\bf Facility location}: We use the facility location objective as defined in Section 4.1 on the {\bf MovieLens dataset} \cite{HK16}. }
	\item {{\bf Movie recommendation}: We use the {\bf MovieLens dataset} \cite{HK16} to recommend movies with the movie ratings matrix $[r_{ij}]$ of rankings from user $i$ and each movie $j$. Exactly as \cite{PST19}, we let $f(S) = (\frac{1}{m} \sum_{i=1}^m \sum_{j \in S} r_{ij})^\alpha$, where $m$ is the number of users, and set $\alpha = 0.8$. }
\end{itemize}
\section{Additional Details on Benchmarks}\label{app:bench}
\subsection{Proof of \textsc{Marginal} upper bound}\label{app:bench1}
We provide the missing derivation of the upper bound of $\texttt{OPT}_k$ for the \textsc{Marginal} benchmark below.

\begin{proposition} Let $S_i$ be the \textsc{Greedy} solution of size $i$ and $O$ the optimal set for a monotone submodular function $f$. Then for all $i < j$, 
$$f_{S_i}(S_j) \geq (1-(1-1/k)^{j-i})f_{S_i}(O) .$$

Furthermore, for all $i<j$, this directly gives an upper bound of $\texttt{OPT}_k$.
$$\texttt{OPT}_k \leq  \frac{f(S_j) - (1-1/k)^{j-i}f(S_i)}{1-(1-1/k)^{j-i}}.$$
\end{proposition}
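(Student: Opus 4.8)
The plan is to establish the marginal inequality $f_{S_i}(S_j) \geq (1-(1-1/k)^{j-i})f_{S_i}(O)$ by the standard greedy progress argument, but carried out relative to the \emph{fixed} base set $S_i$ rather than the running greedy set, and then to obtain the stated upper bound on $\texttt{OPT}_k = f(O)$ by pure algebra. Two structural facts will be used throughout: the greedy solutions are nested, $S_i \subseteq S_\ell$ for $i \leq \ell$, so that $f_{S_i}(S_\ell) = f(S_\ell) - f(S_i)$; and $|O| \leq k$.

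The core of the argument is a one-step contraction for the ``gap'' measured from the base $S_i$. I would define, for $\ell \geq i$,
$$\Delta_\ell = f(S_i \cup O) - f(S_\ell),$$
so that $\Delta_i = f_{S_i}(O)$ and $\Delta_j = f_{S_i}(O) - f_{S_i}(S_j)$. Let $a_{\ell+1}$ be the element greedy adds at step $\ell+1$, i.e.\ the maximizer of $f_{S_\ell}(a)$. By submodularity (subadditivity of marginals) and $|O|\le k$, one has $f_{S_\ell}(O) \leq \sum_{a \in O} f_{S_\ell}(a) \leq k\, f_{S_\ell}(a_{\ell+1})$, while monotonicity ($S_i \subseteq S_\ell$ gives $f(S_\ell \cup O) \geq f(S_i \cup O)$) yields $f_{S_\ell}(O) \geq \Delta_\ell$. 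Chaining these gives $f(S_{\ell+1}) - f(S_\ell) = f_{S_\ell}(a_{\ell+1}) \geq \Delta_\ell / k$, and hence
$$\Delta_{\ell+1} = \Delta_\ell - \big(f(S_{\ell+1}) - f(S_\ell)\big) \leq \left(1 - \tfrac{1}{k}\right)\Delta_\ell.$$
Since $1 - 1/k \geq 0$, multiplying this inequality through the recursion preserves its direction, so iterating from $\ell = i$ to $\ell = j$ gives $\Delta_j \leq (1-1/k)^{j-i}\,\Delta_i$. Rewriting this in terms of $\Delta_i = f_{S_i}(O)$ and $\Delta_j = f_{S_i}(O) - f_{S_i}(S_j)$ is exactly the first claimed inequality.

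For the second claim, I would unpack the marginals and apply monotonicity once more. Writing $\alpha = (1-1/k)^{j-i}$, the first inequality reads $f(S_j) - f(S_i) \geq (1-\alpha)\big(f(S_i\cup O) - f(S_i)\big)$, and since $f(S_i \cup O) \geq f(O)$ by monotonicity, this gives $f(S_j) - \alpha f(S_i) \geq (1-\alpha) f(O)$. Because $j > i$ and $k \geq 1$ we have $0 \leq \alpha < 1$, so dividing by the positive quantity $1-\alpha$ preserves the inequality and yields
$$\texttt{OPT}_k = f(O) \leq \frac{f(S_j) - (1-1/k)^{j-i} f(S_i)}{1-(1-1/k)^{j-i}},$$
as desired. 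The main point to get right is the greedy step relative to the fixed base: the bound must be phrased as $f_{S_\ell}(O) \geq f(S_i\cup O) - f(S_\ell) = \Delta_\ell$ (via $f(S_\ell\cup O)\geq f(S_i\cup O)$), since this is what lets the per-step contraction telescope against the \emph{single} reference set $S_i$. Pleasantly, the nonnegativity of $1-1/k$ means the recursion chains even if some $\Delta_\ell$ turns negative (greedy surpassing $f(S_i\cup O)$), so no separate case analysis is needed.
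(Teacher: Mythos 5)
Your proof is correct and follows essentially the same route as the paper's: the standard greedy one-step bound $f_{S_\ell}(O) \leq k\, f_{S_\ell}(a_{\ell+1})$ combined with monotonicity and the nestedness of the greedy solutions, iterated from $\ell=i$ to $\ell=j$ (your contraction on $\Delta_\ell$ is just the unrolled form of the paper's induction on $j-i$), followed by the same algebraic rearrangement to get the bound on $\texttt{OPT}_k$. If anything, your step $f(S_\ell\cup O)\geq f(S_i\cup O)$, which telescopes everything against the single reference value $f(S_i\cup O)$, is slightly more careful than the paper's chain, which silently equates $f(O)-f(S_j)$ with $f_{S_i}(O)-f_{S_i}(S_j)$.
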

\begin{proof}
We begin by showing $f_{S_i}(S_j) \geq (1-(1-1/k)^{j-i})f_{S_i}(O)$ by induction on $j-i$. 

In the base case, where $j-i=1$, we have $$f_{S_i}(S_j) = f_{S_i}(a_{i+1}) \geq \frac{1}{k} f_{S_i}(O) = (1-(1/k)^1)f_{S_i}(O),$$
as needed.

For the inductive step, we assume that for $j-i = m$, $f_{S_i}(S_j) \geq (1-(1-1/k)^{m})f_{S_i}(O)$ is true.
\begin{align*}
		f_{S_i}(S_{j+1})  & = f_{S_i}(S_j) +  f_{S_j}(a_{j+1}) \\
		& \geq  f_{S_i}(S_j) +  \frac{1}{k}f_{S_j}(O) \\
		& \geq  f_{S_i}(S_j) +  \frac{1}{k}(f(O) - f(S_j)) \\
		& =  f_{S_i}(S_j) +  \frac{1}{k}(f_{S_i}(O) - f_{S_i}(S_j)) \\
		& =  (1-1/k)f_{S_i}(S_j) +  \frac{1}{k}f_{S_i}(O)\\
		&\geq (1-1/k)(1-(1-1/k)^{m})f_{S_i}(O) +  \frac{1}{k}f_{S_i}(O) \\
		& = (1-(1-k)^{m+1})f_{S_i}(O),
\end{align*}
which completes the proof for the statement $f_{S_i}(S_j) \geq (1-(1-1/k)^{j-i})f_{S_i}(O)$.

Finally, we can get the upper bound on $\texttt{OPT}_k$ by rearranging the statement above. By monotonicity, we have $f_{S_i}(S_{j}) \geq (1-(1-k)^{j-i})(\texttt{OPT}_k - f(S_i)$. Rearranging, gives
\begin{align*}
		\texttt{OPT}_k & \leq  \frac{f(S_{j}) - f(S_i) + (1-(1-k)^{j-i}) f(S_i)}{1-(1-k)^{j-i}}\\
		& =\frac{f(S_{j}) -(1-k)^{j-i} f(S_i)}{1-(1-k)^{j-i}}.
\end{align*}
\end{proof}
\subsection{Details on \textsc{Sharpness} benchmark} \label{app:bench_sharp}
In our experiments, we consider the property of Dynamic Submodular Sharpness which yields the strongest sharpness approximation from \citet*{PST19}. For completeness, we state the definition and approximation below. For more details, see Section 1.2.2 of  \citet*{PST19}.

\begin{definition}
A non-negative monotone submodular function $f: 2^N \to \mathbb R$ is said to be dynamic $(c, \theta)$-submodular sharp, where $c = (c_0, c_1, ...,c_{k-1}) \in [1, \infty)^k$ and $\theta = (\theta_0, \theta_1, ...,\theta_{k-1}) \in [0, 1]^k$, if there exists an optimal solution $O$ such that for any subset $S\subseteq N$ with $|S|\leq k$, the function satisfies 
$$\max_{a \in O\backslash S} f_S(a) \geq \frac{1}{kc_{|S|}} \big(f(O) - f(S) \big)^{1-\theta_{|S|}} f(O)^{\theta_{|S|}} .$$
\end{definition}
This inequality can be interpreted as the submodular version of the Polyak-Lojasiewicz inequality \cite{L63} with the $\ell_\infty$-norm. Additionally, any monotone submodular function $f$ is $(c,\theta)$-dynamic submodular sharp as $c_i \to 1$ and  $\theta_i \to 0$. Now, we introduce the approximation in the following theorem.
\begin{customthm}{4}
Consider a non-negative monotone submodular function $f: 2^N \to \mathbb R$ that is dynamic $(c, \theta)$-submodular sharp, with parameters $c = (c_0, c_1, ...,c_{k-1}) \in [1, \infty)^k$ and $\theta = (\theta_0, \theta_1, ...,\theta_{k-1}) \in [0, 1]^k$. Then, \textsc{Greedy} returns a set $S_g$ such that 
$$f(S_g) \geq  \Big[ 1 - \bigg(\Big(\big(1-\frac{\theta_0}{c_0k} \big)^{\frac{\theta_1}{\theta_0}} -\frac{\theta_1}{c_1k}\Big)^{\frac{\theta_2}{\theta_1}} - \cdots -  \frac{\theta_{k-1}}{c_{k-1}k}\bigg)^{\frac{1}{\theta_{k-1}}}  \Big] \cdot f(O).$$
\end{customthm}
To compute the parameters $c = (c_0, c_1, ...,c_{k-1}) \in [1, \infty)^k$ and $\theta = (\theta_0, \theta_1, ...,\theta_{k-1}) \in [0, 1]^k$, we follow a search detailed in \citet*{PST19}. We sequentially iterate over possible values of $c$ in a fixed range $[1, 3]$ with a discretization of 0.01. Given $c_i$, we compute $\theta_i = \min_{|S| \leq k} \Big( \frac{\log (kc_iW_2(S)/ W(S)}{\log(\texttt{OPT}/W(S))}\Big)$, where
$$W(S) = \texttt{OPT} - f(S) \qquad \text{and} \qquad W_2(S) = \max_{a\in O\backslash S} f_S(a).$$

Once $c_i$ and $\theta_i$ are computed, we compute the approximation factor. If the approximation factor improves, we continue the search and update $c_i$; otherwise, we stop. We do this for all $i$. 

\subsection{Details of \textsc{Curvature} upper bound}\label{app:bench_curv}
For large-instances, we are unable to compute \textsc{Curvature} and instead, use find a lower bound on the curvature parameter so that we can derive an upper bound on the curvature approximation. Since computing the curvature parameter $c$ requires a search across all sets, we instead upper bound the approximation using the following methodology. We find an element $a^\star$ that maximizes curvature parameter: $c= \max_{a\in N} 1-\frac{f_{N\backslash a}(a)}{f(a)}$. We then use $a^\star$ to greedily compute an upper bound of the approximation by adding $k$ elements to $S$ to maximize the following function $c(S)= 1-\frac{f_S(a^\star)}{f(a^\star)}$. The resulting curvature parameter can be used to upper bound the true curvature approximation, $1-e^{-c}/c$, for the \textsc{Greedy} algorithm. This procedure was used in the experiments that are shown in Figure 2.  

\section{Additional Experimental Results}
For completeness, we include complete experimental results for the running time of methods on small instances for each $k \in [1,...,10]$ and $n =2k$. In the paper, we show a snapshot of the results in Table 1 for the MovieLens facility location setting. Here, we show the full results for each $k$ in Table 2 and Table 3.
Both of the objectives show the same trend, where the methods that require brute-force, \textsc{OPT}, \textsc{Curvature} and \textsc{Sharpness}, all increase exponentially in running time with respect to $k$. In contrast, \textsc{Dual} is magnitudes faster than these methods and is scalable. The running time was computing by averaging 5 repeated executions.
\begin{table}[!htb]
    \begin{minipage}{.48\linewidth}
      \centering
\resizebox{0.98\columnwidth}{!}{%
       \begin{tabular}{l|lllll}
$k$  & \textsc{Marg.} & \textsc{Dual}   & \textsc{Opt} & \textsc{Curv.} & \textsc{Sharp.} \\ \hline
1  & 3.84 e-5 & 2.34 e-4 & 1.84 e-4  & 1.97 e-4     & 3.07 e-4      \\
2  & 2.69 e-5 & 1.06 e-3 & 3.07 e-4  & 5.15 e-4     & 1.49 e-3      \\
3  & 4.17 e-5 & 2.53 e-3 & 6.15 e-4  & 3.05 e-3     & 6.74 e-3      \\
4  & 5.30 e-5 & 3.58 e-3 & 1.81 e-3  & 0.0108     & 0.0263      \\
5  & 7.24 e-5 & 6.37 e-3 & 6.17 e-3  & 0.0546     & 0.108      \\
6  & 9.60 e-5 & 7.55 e-3 & 0.0212  & 0.176     & 0.448      \\
7  & 1.25 e-4 & 9.66 e-3 & 0.0734  & 0.819     & 1.92      \\
8  & 1.59 e-4 & 0.0129 & 0.249   & 3.55      & 9.00      \\
9  & 1.98 e-4 & 0.0178 & 0.861  & 15.6     & 43.0      \\
10 & 3.17 e-4 & 0.0246  & 3.43    & 74.7       & 187      
\end{tabular}
}
\caption{Average runtimes (seconds) of benchmark methods on MovieLens facility location setting.}
    \end{minipage}%
    \hfill
    \begin{minipage}{.48\linewidth}
      \centering
      \resizebox{0.98\columnwidth}{!}{%
        \begin{tabular}{l|lllll}
$k$  & \textsc{Marg.} & \textsc{Dual}   & \textsc{Opt} & \textsc{Curv.} & \textsc{Sharp.} \\ \hline
1  & 1.59 e-5 & 3.19 e-4 & 1.90 e-4  & 1.71 e-4     & 9.74 e-4      \\
2  & 2.66 e-5 & 2.07 e-3 & 3.76 e-4  & 1.05 e-3     & 2.12 e-3      \\
3  & 3.68 e-5 & 4.31 e-3 & 8.58 e-4  & 4.26 e-3     & 1.00 e-2      \\
4  & 5.24 e-5 & 7.81 e-3 & 3.35 e-3  & 0.0183     & 0.0483      \\
5  & 7.34 e-5 & 0.0127 & 9.91 e-3  & 0.0827     & 0.209      \\
6  & 1.03 e-4 & 0.0177 & 0.0348  & 0.360     & 0.807      \\
7  & 1.28 e-4 & 0.0249 & 0.123  & 1.41     & 3.29      \\
8  & 1.90 e-4 & 0.0319 & 0.450   & 7.18      & 15.4      \\
9  & 1.76 e-4 & 0.0342 & 1.61  & 29.8     & 65.3      \\
10 & 2.20 e-4 & 0.0455  & 5.56    & 117       & 277      
\end{tabular}
}
\caption{Average runtimes (seconds) of benchmark methods on MovieLens movie recommendation setting.}
    \end{minipage} 
\end{table}

\end{document}